\documentclass[runningheads]{llncs}
\usepackage[T1]{fontenc}
\usepackage{amsmath,amssymb}
\usepackage{tikz}
\usepackage{bm,xspace}
\usetikzlibrary{trees, shapes}
\usepackage{graphicx}
\usepackage{booktabs}
\usepackage[noend, linesnumbered]{algorithm2e}
\usepackage{multirow}

\usetikzlibrary{arrows.meta,positioning}
\usepackage{pgfplots}
\pgfplotsset{compat=1.18} 

\newcommand{\AP}{\mathcal{AP}}
\newcommand{\apex}{\bm{A}}
\newcommand{\EPS}{\bm{\varepsilon}}
\newcommand{\epsdom}{\preceq_{\bm{\varepsilon}}}
\newcommand{\open}{\ensuremath{\textsc{Open}}\xspace}

\newcommand{\RP}{\mathcal{RP}}

\newcommand\algname[1]{\textsf{#1}\xspace}
\newcommand\apexAlg{\algname{A*pex}}
\newcommand\rapexAlg{\algname{RA*pex}}
\newcommand\rbexAlg{\algname{RB-Exact}}
\newcommand\rbapAlg{\algname{RB-Approx}}

\newcommand{\ignore}[1]{}

\begin{document}
\title{Approximate Multi-Objective Search Under Rulebooks}
%
%
\author{Omar Muhammetkulyyev \inst{1} \and
Oren Salzman \inst{2} \and
Tichakorn Wongpiromsarn \inst{1}}
\authorrunning{O. Muhammetkulyyev et al.}
%
\institute{Iowa State University, Ames IA 50010, USA \\ \email{omar99@iastate.edu}, \email{nok@iastate} \and
Technion - Israel Institute of Technology, Haifa 3200003, Israel
\email{osalzman@cs.technion.ac.il}
}
\maketitle              
\vspace{-10mm}
\begin{abstract}
Robotic planning often involves multiple objectives with complex priority relationships, such as safety, efficiency, and regulatory compliance. Rulebooks formalize these relationships, allowing partial ordering of objectives that generalizes both Pareto and lexicographic dominance. Computing the full set of rulebook-optimal solutions, however, is computationally expensive. To address this challenge, we introduce the concept of $\EPS$-rule-dominance, a principled notion of approximate dominance under rulebooks, and propose \rapexAlg, a best-first search algorithm that efficiently computes a compact set of $\EPS$-approximate rulebook-optimal solutions. \rapexAlg leverages dimensionality reduction, a technique used to speed up existing multi-objective search algorithms,  while respecting rule hierarchies by maintaining separate closed sets and performing dominance checks over truncated and residual rule sets. We provide a formal analysis of \rapexAlg, proving that every rulebook-optimal solution is $\EPS$-rule-dominated (a generalization of approximate dominance we introduce) by at least one solution in the returned set. Empirical results demonstrate that our approach achieves computation times over two orders of magnitude faster than existing methods.

\vspace{-2mm}
\keywords{Multi-objective search \and Motion and path planning.}
\end{abstract}
\vspace{-10mm}
\section{Introduction}
\vspace{-3mm}
Robotic planning and navigation problems often require reasoning over multiple, potentially conflicting objectives, including safety, efficiency, and task-related constraints. For example, an autonomous vehicle must avoid collisions, respect traffic rules, and minimize travel time and energy consumption. These requirements naturally lead to multi-objective search (MOS), in which planning is performed on graphs whose edge weights are represented by vectors of multiple cost components, each representing a distinct objective.

In classical MOS, solution paths are compared using Pareto dominance, i.e., a path $\pi$ dominates another path $\pi'$ if every cost component of $\pi$ is no larger than the corresponding component of $\pi'$ and at least one component is strictly smaller. Instead of returning a single optimal solution, MOS aims to compute the complete set of non-dominated solution paths. While this formulation captures trade-offs among objectives without requiring manual weighting, it is computationally demanding as the number of non-dominated paths grows exponentially with the size of the graph~\cite{breugem2017analysis,Ehrgott05}. Moreover, unlike a single objective search where there is only one optimal cost, the number of optimal cost vectors in MOS are also exponential in terms of the graph size.

The inherent complexity of MOS has motivated the development of approximate algorithms \cite{GS21,zhang2022pex}. Instead of computing the complete set of non-dominated solutions, these methods aim to compute a reduced representative subset that approximates the Pareto frontier within a specified tolerance.

Many robotic applications, however, impose additional structure on objective relationships that is not captured by Pareto dominance alone~\cite{PWAS25}. In particular, safety-critical objectives are often strictly more important than efficiency-related objectives. This observation has led to lexicographic formulations that impose a strict priority ordering among objectives \cite{SlutskyYWF21}. Lexicographic search can substantially reduce the solution space, but it requires a strict total ordering and cannot handle partial or mixed priority relationships.

In practice, robotic planning problems frequently involve a combination of hierarchical and non-comparable objectives. For example, collision avoidance may be strictly prioritized over regulatory compliance, while objectives such as minimizing energy consumption and travel time may remain mutually non-comparable. We adopt the \emph{rulebooks} formalism \cite{CensiSWYPFF19} to formalize complex relationships among objectives. Rulebooks capture partial priority structures that subsume both Pareto dominance and lexicographic ordering. However, computing the set of non-dominated solutions under a rulebook remains computationally expensive and exhibits exponential worst-case complexity \cite{WongpiromsarnSF26}.

This paper addresses the computational challenges of MOS under rulebooks. Our contributions are threefold. First, we introduce the notion of $\EPS$-dominance with respect to rulebooks, which provides a principled approximation of the rulebook non-dominated solution set. Second, we propose an algorithm that computes a set of approximate solutions such that every solution path is $\EPS$-rule-dominated by at least one path in the returned set. Finally, we present extensive experimental results demonstrating that our approach achieves significant computational improvements (up to two orders of magnitude faster) over existing algorithms that compute the complete set of rulebook non-dominated solutions, while preserving the intended objective priorities.

\vspace{-7mm}
\section{Preliminaries}
\vspace{-6mm}
This section presents formal definitions and concepts of MOS~\cite{SalzmanF0ZCK23} and rulebooks~\cite{CensiSWYPFF19,WongpiromsarnSF26}. Throughout the paper, we let $\mathbb{R}, \mathbb{R}_{\geq 0}, \mathbb{R}_{> 0}$ and $\mathbb{N}$ denote the set of real, non-negative real, positive real, and natural numbers, respectively. Boldface symbols denote vectors or vector-valued functions and $v_i$ denotes the $i$-th component of a vector or vector function $\boldsymbol{v}$.

Let $\mathbf{p}$ and~$\mathbf{q}$ be $N$-dimensional vectors.
We write $\mathbf{p}+\mathbf{q}$ to denote component-wise addition.
For a minimization problem, we say that $\mathbf{p}$ \emph{weakly dominates} $\mathbf{q}$,
denoted $\mathbf{p} \preceq \mathbf{q}$, if $p_i \leq q_i$ for all $i$.
We say that $\mathbf{p}$ \emph{dominates} $\mathbf{q}$, denoted $\mathbf{p} \prec \mathbf{q}$,
if $\mathbf{p} \preceq \mathbf{q}$ and $p_j < q_j$ for at least one index $j$.
%
%
Finally,
let~$\EPS \in \mathbb{R}_{\geq 0}^N$ be another non-negative $N$-dimensional vector.
We say that~$\mathbf{p}$ \emph{approximately dominates}~$\mathbf{q}$ with an \emph{approximation factor} $\EPS$, denoted $\mathbf{p} \preceq_{\EPS} \mathbf{q}$, if $p_i \leq (1+\varepsilon_i) q_i$ for all $i$.

\vspace{-3mm}
\subsection{Multi-Objective Search (MOS)}
\vspace{-1mm}
In Multi-Objective Search (MOS), we are given a directed graph~$G=(V,E)$ where each edge $e\in E$ has a nonnegative cost vector $\mathbf{c}(e)\in\mathbb{R}_{\geq 0}^N$, where $N>0$ is the number of objectives.
A path is a sequence of vertices $\pi=\langle v_1, \ldots, v_k\rangle$ such that $v_i \in V$ and $(v_i, v_{i+1}) \in E$ for all $i$.
We let $\Pi$ denote the set of all paths in $G$.
The cost of a path $\pi \in \Pi$ is defined as the component-wise sum of the cost vectors of its edges: $c(\pi) = \sum_i \mathbf{c}(v_i, v_{i+1})$.

Given start and target vertices $s,t \in V$,
a path from $s$ to $t$ is called a \emph{solution}.
A solution is  \emph{Pareto-optimal} iff its cost is not dominated by any other solution.
The objective of the basic MOS problem is to compute the set $\Pi^\star \subseteq \Pi$ of Pareto-optimal solutions, also known as the \emph{Pareto front} (PF), for given $s, t \in V$.
As the size of $\Pi^\star$ may be exponential in $\vert V \vert$, computing the entire PF is often impractical.
Thus, we instead seek its bounded approximation.
%
Specifically, given an approximation factor~$\EPS \in \mathbb{R}_{\geq 0}^N$,
the \emph{$\EPS$-approximate PF}, denoted $\Pi^\star_{\EPS}$, is a set of solutions such that
$\forall \pi \in \Pi^\star,
\exists \pi' \in \Pi^\star_{\EPS}
~s.t.~
\mathbf{c}(\pi') \preceq_{\EPS} \mathbf{c}(\pi)$.
Namely, every solution in $\Pi^\star$ is approximately dominated by some solution in~$\Pi^*_{\EPS}$.

\vspace{-5mm}
\subsubsection{Exact MOS algorithms and dimensionality reduction.}
Arguably, the most common approach to MOS is using a best-first search approach~\cite{AhmadiTHK21,CasasSB21,CasasKSB23,de2005new,mandow2010multiobjective,ren2025emoa}
which generalizes the celebrated \algname{A$^*$} algorithm~\cite{HNR68} to the MOS setting.

A key insight that dramatically improved the efficiency of these algorithms was to order the nodes in the priority queue in increasing lexicographic order and apply the notion of \emph{dimensionality
reduction}~\cite{pulido-2015-namoadr}.
More specifically, when the search frontier is ordered lexicographically, the value of the first objective is guaranteed to be non-decreasing throughout the expansion process. This monotonicity implies that any newly generated node will essentially have a first-objective cost that is equal to or greater than that of any previously expanded node, rendering explicit comparisons for this dimension redundant. As a result, the dominance check can be restricted to the remaining objectives, effectively reducing the dimensionality of the problem by one.
This was shown (see, e.g.,~\cite{hernandez2023simple,pulido-2015-namoadr}) to have a dramatic impact on the running time of the algorithm. For additional details, see~\cite{pulido-2015-namoadr,SalzmanF0ZCK23}.

\vspace{-5mm}
\subsubsection{Approximate MOS algorithms.}
Arguably, the state-of-the-art algorithm to compute an $\EPS$-approximate PF is \apexAlg~\cite{zhang2022pex}, which also serves as the algorithmic foundation of our algorithm.
%
Similar to exact MOS heuristic search algorithms, \apexAlg performs a best-first search over the search space. The efficiency of \apexAlg stems from the fact that instead of reasoning about single paths, \apexAlg reasons about \emph{sets} of paths with the same last vertex and similar costs, which results in small numbers of search-node expansions and thus small runtimes.

Specifically, each node of \apexAlg is represented by an \emph{apex-path pair} $\AP=\langle \apex, \pi \rangle$, where $\apex \in \mathbb{R}_{\geq 0}^N$ is called the node's \emph{apex} and $\pi \in \Pi$ is called  the node's \emph{representative path}, with the requirement that $\apex \preceq \mathbf{c}(\pi)$, i.e., the apex weakly dominates the cost of the representative path.
%
%
The vertex $v(\AP)$ of $\AP$ is defined as $v(\AP):=v(\pi)$, where $v(\pi) \in V$ is the last vertex of path $\pi$.
Each apex-path pair $\AP=\langle \apex,\pi \rangle$ corresponds to a set of paths $\Pi_\AP$ (which includes~$\pi$) with the same last vertex $v(\AP)$.
Instead of storing $\Pi_\AP$ explicitly, \apexAlg
only stores the apex $\apex = \min_{\pi'\in \Pi_\AP}\{ \mathbf{c}(\pi')\}$, which is the component-wise minimum of (and hence weakly dominates) the costs of all paths in $\Pi_\AP$, including the representative path $\pi \in \Pi_\AP$.
Given a heuristic function $\mathbf{h} : V \to \mathbb{R}_{\geq 0}^N$,
the $\mathbf{g}$-value and $\mathbf{f}$-value of $\AP=\langle \apex, \pi \rangle$ are defined as
$\mathbf{g}(\AP):=\apex$ and $\mathbf{f}(\AP):=\apex + \mathbf{h}(v(\AP))$.
Finally, we say that $\AP$ is $\EPS$-bounded iff~$\mathbf{f}(\pi) \epsdom \mathbf{f}(\AP)$, where $\mathbf{f}(\pi) := \mathbf{c}(\pi) + \mathbf{h}(v(\pi))$.
As we will see, all nodes in \apexAlg will be $\EPS$-bounded.

Before we explain how apex-path pairs are used, let us define the operations on apex-path pairs:
First, \emph{extending}  an apex-path pair $\AP=\langle \apex, \pi \rangle$ by an edge $e$ results in an apex-path pair $\AP' = \langle \apex + \mathbf{c}(e), \pi' \rangle$, where $\pi'$ is a path $\pi$ extended by edge $e$.
Conceptually, apex-path pair $\AP'$ corresponds to the set of paths~$\Pi_{\AP'}$ that extends every path in $\Pi_\AP$ by $e$. It is easy to verify that the apex of~$\AP'$ is the component-wise minimum of the costs of the paths in $\Pi_{\AP'}$.

The second operation is \textit{merging} two apex-path pairs that contain the same vertex.
Conceptually, merging two apex-path pairs corresponds to merging the two sets of paths that these two apex-path pairs correspond to.
Hence, the apex of the merged apex-path pair is the component-wise minimum of the apexes of the two apex-path pairs.
The representative path of the merged apex-path pair is either one of the two representative paths of the two apex-path pairs.

\apexAlg starts with a single apex-path pair $\langle \mathbf{0}, [s] \rangle$ in a priority-queue \open.
In each iteration, \apexAlg extracts an apex-path pair $\AP$ from \open with the lexicographically smallest $\mathbf{f}$-value. The algorithm then performs dominance checks and discards $\AP$ if either
(i)~there exists a solution already found whose cost $\EPS$-dominates the $\mathbf{f}$-value of $\AP$
or if
(ii)~there exists an expanded apex-path pair that contains $v(\AP)$ and whose $\mathbf{g}$-value weakly dominates $\mathbf{g}(\AP)$

When \apexAlg expands an apex-path pair that contains the goal vertex $t$, it adds the representative path of this apex-path pair to the set of found solutions.
When \apexAlg expands an apex-path pair that does not contain $t$, it generates a child apex-path pair~$\AP$ for each vertex $v'$ such that $(v,v') \in E$  by extending the expanded apex-path pair with edge $(v,v')$.
Consider that the child apex-path pair $\AP$ is not discarded after the dominance checks, and let $\open[v]$ be the set of apex-path pairs in \open that contains vertex~$v$. \apexAlg then checks   if there exists an apex-path pair $\AP'$ in $\open[v(\AP)]$ that results in an $\EPS$-bounded apex-path pair when merged with $\AP$. If so, \apexAlg removes~$\AP'$ from \open and then adds the merged apex-path pair to \open. Otherwise, it adds $\AP$ to \open.
When \open becomes empty, \apexAlg terminates and returns the set of solutions as an $\EPS$-approximate Pareto frontier.

\vspace{-4mm}
\subsection{Rulebooks}
\label{ssec:rulebooks}
\vspace{-2.5mm}
To avoid ambiguity in terminology, we fix the following conventions for the remainder of the paper. Specifically, we follow standard robotics terminology and use the term \emph{state} to refer to what is commonly called a \emph{vertex} in graph-search literature, and \emph{realization} to refer to a \emph{path}. Readers familiar with graph terminology may interpret these terms interchangeably.

A rulebook is evaluated over a set of possible outcomes, referred to as the set of \emph{realizations} and denoted by $\Sigma$. In the context of multi-objective search, a realization corresponds to a path in the underlying graph. A rulebook consists of two main components: a set of rules and a preorder that specifies their relative importance. Each rule corresponds to a distinct objective and assigns a non-negative cost that reflects the degree to which the realization violates the objective. The preorder encodes priority relationships among rules, allowing the representation of both hierarchical objectives and objectives that are not comparable.

\vspace{-2mm}
\begin{definition}
  A  \emph{rule} is a function $r : \Sigma \to \mathbb{R}_{\geq 0}$ that measures the
  degree of violation of its argument.
\end{definition}
\vspace{-2mm}

For any realizations $x, y \in \Sigma$, $r(x) < r(y)$ indicates that $y$ violates the rule $r$ to a greater extent than $x$. In particular, $r(x) = 0$ indicates that $x$ fully satisfies the rule. Although we refer to these functions as ``rules'', they are not limited to regulatory constraint. A rule may represent performance-related objectives like efficiency or user preferences that are desirable but not strictly required.

\vspace{-2mm}
\begin{definition}
  A \emph{preorder} on a set $S$ is a binary relation $\lesssim$ that is
  reflexive ($s \lesssim s$ for all $s \in S$), and
  transitive ($s_{1} \lesssim s_{2}$ and $s_{2} \lesssim s_{3}$ imply $s_{1} \lesssim s_{3}$ for
  all $s_{1}, s_{2}, s_{3} \in S$).
\end{definition}
\vspace{-2mm}

A preorder generalizes a partial order by relaxing the antisymmetry requirement. In particular, it allows both $s_{1} \lesssim s_{2}$ and $s_{2} \lesssim s_{1}$ to hold for distinct elements $s_{1} \not= s_{2}$. This property enables the representation of objectives that are of equal priority \cite{WongpiromsarnSF26}. One can define an equivalence relation $\sim$ on $S$ so that $s_1 \sim s_2$ if and only if $s_1 \lesssim s_2$ and $s_2 \lesssim s_1$. The preorder on $S$ can then be viewed as a partial order on $S \, \setminus \sim$ where each element is an equivalence class of $\sim$.

\vspace{-2mm}
\begin{definition}
  A rulebook is defined as a tuple $\mathcal{R} = \langle R, \lesssim \rangle$,
  where $R$ is the set of rules and $\lesssim$ is a preorder on $R$
  that specifies their relative importance.
\end{definition}
\vspace{-2mm}


As proved by Slutsky et al.~\cite{CensiSWYPFF19}, a rulebook $\mathcal{R}$ induces a preorder $\lesssim_{\mathcal{R}}$ on $\Sigma$, ensuring consistency and preventing cyclic preferences among realizations. Intuitively, a realization $x$ is considered ``at least as desirable as'' a realization $y$ if any increased violation in a lower-priority rule is compensated by an improvement in a higher-priority rule. Formally, we say that $x$ \emph{weakly rule-dominates} $y$, denoted $x \lesssim_{\mathcal{R}} y$, if for any rule $r' \in R$ such that $r'(x) > r'(y)$, there exists a higher priority rule $r > r'$ such that $r(x) < r(y)$. We say that $x$ \emph{rule-dominates}~$y$, denoted by $x <_{\mathcal{R}} y$, if $x \lesssim_{\mathcal{R}} y$ but $y \not\lesssim_{\mathcal{R}} x$.




\vspace{-4mm}
\section{Problem Formulation}
\vspace{-3mm}

We consider multi-objective search on a graph $G = (S, E)$, where $S$ is a finite set of states and $E \subseteq S \times S$ is a finite set of edges. A realization is any path in $G$, regardless of its start or end states. We let $\Sigma$ denote the set of all realizations. The quality of a realization is evaluated using a rulebook $\mathcal{R} = \langle R, \lesssim \rangle$ where $R = \{r_1, \ldots, r_N\}$, which induces a preorder $\lesssim_{\mathcal{R}}$ on $\Sigma$ as defined in Section \ref{ssec:rulebooks}.

Let $s_{\text{start}} \in S$ and $s_{\text{goal}} \in S$ denote the start and goal states, respectively. We define the set of solutions as the subset of realizations that start at $s_{\text{start}}$ and end at $s_{\text{goal}}$ and denote this set by $\Sigma_{\text{sol}} \subseteq \Sigma$.

We first define the notion of optimality with respect to a rulebook. A solution is said to be \emph{rulebook-optimal} if it is not rule-dominated by any other solution. The set of all such solutions forms the \emph{rulebook-optimal solution set}.

\vspace{-1mm}
\begin{definition}
The \emph{rulebook-optimal solution set} is defined as $\mathcal{P}_{\mathcal{R}} = \{x \in \Sigma_{\text{sol}} \ | \ \not\exists y \in \Sigma_{\text{sol}} \text{ such that } y <_{\mathcal{R}} x\}$
\end{definition}
\vspace{-2mm}

The set $\mathcal{P}_{\mathcal{R}}$ generalizes the classical Pareto-optimal set and captures both hierarchical and non-comparable objectives. As in standard multi-objective search, the size of $\mathcal{P}_{\mathcal{R}}$ can grow exponentially in the size of the graph, making exact computation impractical for large problems. To enable approximation of $\mathcal{P}_{\mathcal{R}}$, we introduce a relaxed notion of rule-dominance.

\vspace{-2mm}
\begin{definition}
Let $\mathcal{R} = \langle R, \lesssim \rangle$ be a rulebook, where $R = \{r_1, \ldots, r_N\}$, and $\EPS \in \mathbb{R}_{\geq 0}^{N}$. For realizations $x, y \in \Sigma$, we say that $x$ \emph{$\EPS$-rule-dominates} $y$, denoted $x \lesssim_{\mathcal{R}}^{\EPS} y$ if for any rule $r_j \in R$ such that $r_j(x) > (1+\varepsilon_{j}) r_j(y)$, there exists a higher-priority rule $r_i > r_j$ such that $r_i(x) < (1+\varepsilon_{i}) r_i(y)$.
\label{def:eps-rule-dom}
\end{definition}
\vspace{-4mm}

\vspace{-2mm}
\begin{example}
Consider a rulebook $\mathcal{R} = \langle R, \lesssim \rangle$ with $R = \{r_1, r_2, r_3\}$, $r_1 > r_2$ and $r_1 > r_3$, i.e., $r_1$ is the highest priority rule while $r_2$ and $r_3$ are not comparable.
Consider two realizations $x, y \in \Sigma$ with
$r_1(x) = 1.9$, $r_2(x) = r_3(x) = 2$ and $r_1(y) = r_2(y) = r_3(y) = 1$.
Then, $y \lesssim_{\mathcal{R}} x$ but $x \not\lesssim_{\mathcal{R}} y$ because $r_1(x) > r_1(y)$ and there is no rule $r_i$ satisfying $r_i > r_1$. However, with $\EPS = \mathbf{1}$, we get  $y \lesssim_{\mathcal{R}}^{\EPS} x$ and $x \lesssim_{\mathcal{R}}^{\EPS} y$ because $r_1(x) < (1+\varepsilon_1) r_1(y)$.
\label{ex:eps-rule-dominance}
\end{example}
\vspace{-2mm}


This definition relaxes rule-dominance by allowing bounded degradation, controlled by $\EPS$. Using $\EPS$-rule-dominance, we define a notion of  an approximate optimal solution set.

\vspace{-2mm}
\begin{definition}
A set $\mathcal{P}_{\mathcal{R}}^{\EPS} \subseteq \Sigma_{\text{sol}}$ is an \emph{$\EPS$-approximate rulebook-optimal solution set}  if for every rulebook-optimal solution $x' \in \mathcal{P}_{\mathcal{R}}$, there exists a solution $x \in \mathcal{P}_{\mathcal{R}}^{\EPS}$ such that $x \lesssim_{\mathcal{R}}^{\EPS} x'$.
\end{definition}
\vspace{-2mm}

We note that this definition does not require the elements of $\mathcal{P}_{\mathcal{R}}^{\EPS}$ to be rulebook-optimal or even near-optimal. In the extreme case, the entire solution set $\Sigma_{\text{sol}}$ trivially satisfies this definition. Our goal is not to minimize the suboptimality of the returned solutions, but to efficiently compute $\mathcal{P}_{\mathcal{R}}^{\EPS}$, ideally one of small size.

\vspace{-2mm}
\begin{problem}[Approximate Multi-Objective Search under Rulebooks:]
Given a graph $G = (S, E)$, start and goal states $s_{\text{start}}, s_{\text{goal}} \in S$, a rulebook $\mathcal{R} = \langle R = \{r_1, \ldots, r_N\}, \lesssim \rangle$, and a tolerance vector $\EPS \in \mathbb{R}_{\geq 0}^N$ , compute an $\EPS$-approximate rulebook-optimal solution set $\mathcal{P}_{\mathcal{R}}^{\EPS}$.
\label{prob:rapprox}
\end{problem}

\vspace{-8mm}
\section{Rulebook-{\apexAlg} (\rapexAlg)}
\vspace{-3mm}
This section presents rulebook-\apexAlg (\rapexAlg), our algorithm for Problem~\ref{prob:rapprox}.
At a high level, \rapexAlg follows the same best-first ``expand-and-prune'' template as \apexAlg, but replaces component-wise $\EPS$-dominance $\preceq_{\EPS}$ with the rulebook-based relation $\lesssim_{\mathcal{R}}^{\EPS}$.
This change is conceptually simple, yet it has two important consequences that drive the technical development in the rest of the section.

\apexAlg can rely on purely component-wise reasoning: when \open is ordered lexicographically by $\mathbf{f}$, the first component becomes monotone along the search and can be ignored in dominance checks (dimensionality reduction).
Under rulebooks, dominance depends on the \emph{priority structure} among rules and, crucially, on whether a difference is \emph{strict} (an improvement in a higher-priority rule can compensate for a degradation in a lower-priority rule).
As a result, the simple component-wise dominance machinery of \apexAlg no longer applies directly.

We first give a baseline \rapexAlg that operates on full rule-value vectors and uses $\lesssim_{\mathcal{R}}$ and $\lesssim_{\mathcal{R}}^{\EPS}$ directly for pruning.
This baseline introduces the algorithmic logic but does not benefit from dimensionality reduction.
We then develop a dimensionality-reduction variant: we explain why the \apexAlg argument fails under rulebooks, and show how to restore an efficient dominance test by partitioning expanded nodes  while preserving correctness.

\ignore{
To solve Problem \ref{prob:rapprox}, we extend the core idea of \apexAlg to support $\varepsilon$-rule-dominance. Similar to \apexAlg, we seek to compute a small representative set of solutions that constitutes an $\EPS$-approximate rulebook-optimal solution. This extension, however, is non-trivial because in \apexAlg, $\EPS$-dominance is defined component-wise. This structure enables a simple form of dimensionality reduction: by ordering \open lexicographically by the $\mathbf{f}$-values, the first component is implicitly handled, and dominance checks only need to compare the remaining components.

This idea does not carry over to rulebook dominance because the hierarchical structure of rules prevents lexicographic ordering from isolating a single component and thus complicate dominance checks. To highlight these challenges, we first present a baseline version of our algorithm, rulebook-\apexAlg (\rapexAlg) that operates directly on full rule-value vectors, without dimensionality reduction. We then show how to incorporate dimensionality reduction to recover much of the efficiency of \apexAlg while preserving correctness under $\EPS$-rule-dominance.}

\vspace{-4mm}
\subsection{Baseline \rapexAlg}
\vspace{-2mm}
To simplify notation, we define a mapping that associates each realization with its vector representation of rule-violation values.

\vspace{-2.5mm}
\begin{definition}
Let $\mathcal{R} = \langle R, \lesssim \rangle$ be a rulebook with $R = \{r_1, \ldots, r_N\}$. We define the mapping $\mathbf{c}_{\mathcal{R}} : \Sigma \to \mathbb{R}_{\geq 0}^N$ by $\mathbf{c}_{\mathcal{R}}(x) = [r_1(x), \ldots, r_N(x)]$ so that the $i$-th component of $\mathbf{c}_{\mathcal{R}}(x)$ represents the degree to which realization $x$ violates rule $r_i$.
\end{definition}
\vspace{-2mm}

This vector representation allows us to reason directly about $N$-dimensional vectors (similar to the way that \apexAlg reasons about path costs).
In particular, both rule-dominance and $\EPS$-rule-dominance can be defined and evaluated purely at the vector level, without reference to the underlying realization.

\vspace{-2.5mm}
\begin{definition}
Let $\mathcal{R} = \langle R, \lesssim \rangle$ be a rulebook with $R = \{r_1, \ldots, r_N\}$ and let
$\mathbf{u}, \mathbf{v} \in \mathbb{R}_{\geq 0}^N$ be $N$-dimensional vectors.
We say that $\mathbf{u} \lesssim_{\mathcal{R}} \mathbf{v}$, if for any rule $r_j \in R$ such that
$u_j > v_j$, there exists a higher-priority rule $r_i > r_j$ such that $u_i < v_i$.
\end{definition}
\vspace{-3mm}




Similarly, $\EPS$-rule-dominance $\lesssim_{\mathcal{R}}^{\EPS}$ can be defined for vectors in the same way as for realizations in Definition \ref{def:eps-rule-dom}. Motivated by \apexAlg, we represent each node by a \emph{rule-apex-realization pair}.

\vspace{-2.5mm}
\begin{definition}
    A \emph{rule-apex-realization pair} $\RP = \langle \mathbf{r},x \rangle$
    consists of a
    \emph{rule apex} $\mathbf{r} \in \mathbb{R}_{\geq 0}^N$
    and a
    \emph{representative realization}  $\mathbf{r} \lesssim_{\mathcal{R}}\mathbf{c}_{\mathcal{R}}(x)$.
\end{definition}
\vspace{-2mm}

A state of $\RP = \langle \mathbf{r},x \rangle$, denoted $s(\RP)$, is defined as the last state of $x$. Similar to \apexAlg, for a given heuristic function $\mathbf{h} : S \to \mathbb{R}_{\geq 0}^N$, we define the $\mathbf{g}$-value and $\mathbf{f}$-value of $\RP$ as $\mathbf{g}(\RP) := \mathbf{r}$ and $\mathbf{f}(\RP) := \mathbf{r} + \mathbf{h}(s(\RP))$. We say that $\RP$ is $\EPS$-bounded iff $\mathbf{f}(x) \lesssim_{\mathcal{R}}^\varepsilon \mathbf{f}(\RP)$, where $\mathbf{f}(x) := \mathbf{c}_{\mathcal{R}}(x) + \mathbf{h}(s(x))$. \rapexAlg ensures that all the rule-apex-realization pairs are $\EPS$-bounded.


\setlength{\floatsep}{2pt} 
\SetAlgoSkip{}
\SetKwComment{Comment}{/* }{ */}
\SetKwInput{KwInput}{Input}
\RestyleAlgo{ruled}
\SetKwIF{If}{ElseIf}{Else}{if}{}{else if}{else}{endif}
\SetKw{KwThen}{then}
\begin{algorithm}[t]
$\open \gets \{ \langle$ $\mathbf{0}$, $ [s_{\text{start}}] \rangle \}$
\tcp*[r]{ordered according to rule-dominance $\lesssim_{\mathcal{R}}$}
\label{line:rapex:init-open}
$solutions \gets \emptyset$\;\label{line:rapex:init-solutions}
\For{$s \in S$}{
    $G_{\rm{cl}}(s) \gets \emptyset$ \;\label{line:rapex:init-gcl}
}
\While{$\open \not = \emptyset$}{
    $\mathcal{RP} = \langle \mathbf{r}, x\rangle \gets \open.\mathsf{extract\_min()}$  \label{line:rapex:extract}

    \If{$\mathsf{is\_dominated}(\mathcal{RP}, G_{\rm{cl}}, solutions)$ \KwThen \tcp*[f]{Alg.~\ref{alg:is_dominated}}}{ \label{line:rapex:prune}
        $\mathbf{continue}$ \;
    }

    $G_{\rm{cl}}(s(\mathcal{RP})).add(\mathbf{f}(\mathcal{RP}))$ \;\label{line:rapex:add-gcl}
    \If{$s(\mathcal{RP}) = s_{\text{goal}}$}{ \label{line:rapex:goal-test}
        $\mathsf{insert}(\mathcal{RP}, solutions)$
        \label{line:rapex:insert-solution}\tcp*[r]{Alg.~\ref{alg:insert}}
        $\mathbf{continue}$ \;
    }
    \For{$s' \in \mathsf{succ}(s(\mathcal{RP}))$}{
        $\mathcal{RP}' \gets \langle \mathbf{r} + \mathbf{c}_{\mathcal{R}}(\langle s(\mathcal{RP}), s' \rangle), \mathsf{extend}(x, \langle s(\mathcal{RP}), s' \rangle) \rangle$ \;\label{line:rapex:generate-child}
        \If{$\mathsf{is\_dominated}(\mathcal{RP}', G_{\rm{cl}}, solutions)$ \KwThen \tcp*[f]{Alg.~\ref{alg:is_dominated}}}{ \label{line:rapex:prune-child}
            $\mathbf{continue}$ \;
        }
        $\mathsf{insert}(\mathcal{RP}', \open)$ \tcp*[r]{Alg.~\ref{alg:insert}} \label{line:rapex:insert-open}
    }
    $\mathbf{return} \{ x : \langle \mathbf{r}, x \rangle \in solutions \}$\label{line:rapex:return}
}
\caption{$\mathsf{RA^*pex}(S, E, \mathcal{R}, s_{\text{start}}, s_{\text{goal}}, \mathbf{h}, \EPS)$}
\label{alg:RApex}
\end{algorithm}

The baseline version of \rapexAlg, outlined in Alg. \ref{alg:RApex} closely follows the structure of \apexAlg, but replaces vector-based dominance ($\preceq$ and $\preceq_{\EPS}$) with rule-dominance ($\lesssim_{\mathcal{R}}$ and $\lesssim_{\mathcal{R}}^{\EPS}$) to account for hierarchical rule priorities. This difference is reflected in the $\mathsf{is\_dominated}$ function, outlined in Alg. \ref{alg:is_dominated}.

\begin{algorithm}[t]
    \If{$\exists \mathcal{RP}' = \langle \mathbf{r}', x' \rangle \in solutions : \mathbf{f}(x') \lesssim_{R}^{\varepsilon} \mathbf{f}(\mathcal{RP})$ \KwThen \tcp*[f]{Alg.~\ref{alg:lesssim_R}}}{\label{line:isdom:solutions-check}
        remove $\langle \mathbf{r}',x' \rangle$ from $solutions$ \;
        add $\langle \mathsf{comp\_wise\_min}(\mathbf{r}, \mathbf{r}'), x'\rangle$ to $solutions$ \label{line:isdom:update-solution} \;
        $\mathbf{return} \; \texttt{TRUE}$ \;
    }
    \If{$\exists w \in G_{\rm{cl}}(s(\mathcal{RP})) : w \lesssim_{\mathcal{R}} \mathbf{r}$ \KwThen }{\label{line:isdom:local-check}
        $\mathbf{return} \; \texttt{TRUE}$ \;
    }
    $\mathbf{return} \; \texttt{FALSE}$ \;
    \caption{$\mathsf{is\_dominated}(\mathcal{RP} = \langle \mathbf{r}, x \rangle, G_{\rm{cl}}, solutions)$}
    \label{alg:is_dominated}
\end{algorithm}
\begin{algorithm}[t]
    Let $\langle v_1^{\mathcal{R}}, v_2^{\mathcal{R}} \ldots v_M^{\mathcal{R}} \rangle$ be a topological order of $R\setminus \sim$ \;
    $Q \gets \langle v_1^{\mathcal{R}}, v_2^{\mathcal{R}}, \ldots, v_M^{\mathcal{R}} \rangle$ \;
    \While{$Q \neq \emptyset$ }{
        $v = Q.pop()$ \;
        \If{$ \exists r \in [v]$ such that $r(x) > (1+\varepsilon)r(y)$ \KwThen }{
            $\mathbf{return} \, \, \texttt{FALSE}$;
        }
        \ElseIf{$\exists r \in [v]$ such that $r(x) < (1 + \varepsilon)r(y)$}{
            remove all successors of $v$ from $Q$ \;
        }
    }
    $\mathbf{return} \, \, \texttt{TRUE}$ \;
    \caption{Approximate dominance check $x \lesssim_{\mathcal{R}}^{\varepsilon} y$ }
    \label{alg:lesssim_R}
\end{algorithm}
\begin{algorithm}[t]
    \For{$\mathcal{RP}' \in list$}{
        $\mathcal{RP}_{new} \gets \mathsf{merge}(\mathcal{RP}, \mathcal{RP}')$ \;\label{line:insert:merge}
        \If{$\mathcal{RP}_{new}$ is $\EPS$-bounded}{ \label{line:insert:eps-bound-check}
            remove $\mathcal{RP'}$ from $list$ \;
            add $\mathcal{RP}_{new}$ to $list$ \;
            $\mathbf{return}$ \;
        }
    }
    add $\mathcal{RP}$ to $list$ \;
    $\mathbf{return}$ \;
    \caption{$\mathsf{insert}(\mathcal{RP}, list)$}
    \label{alg:insert}
\end{algorithm}

\rapexAlg maintains three main data structures throughout the search:
(i)~the priority queue \open, storing candidate rule-apex-realization pairs and ordered according to rule-dominance $\lesssim_{\mathcal{R}}$ (instead of lexicographic ordering as in \apexAlg), since dominance checks are performed over the full rule-value vectors;
(ii)~the set $solutions$, storing the current representative solutions; and
(iii)~a per-state closed structure $G_{\rm{cl}}(\cdot)$ (initialized in Line~\ref{line:rapex:init-gcl}), storing the $\mathbf{f}$-values of rule-apex-realization pairs that have already been expanded at state~$s$.

At each iteration, \rapexAlg extracts a rule-apex-realization pair $\RP = \langle \mathbf{r},x \rangle$ from \open (Line~\ref{line:rapex:extract}) and immediately applies the pruning test $\mathsf{is\_dominated}$ (Line~\ref{line:rapex:prune}, implemented in Alg.~\ref{alg:is_dominated}). This test consults both $solutions$ and the state-local closed set $G_{\rm{cl}}(s(\RP))$ to decide whether $\RP$ can be discarded; the approximate dominance checks within $\mathsf{is\_dominated}$ rely on Alg.~\ref{alg:lesssim_R}.
If $\RP$ is not pruned, we insert its $\mathbf{f}$-value into $G_{\rm{cl}}(s(\RP))$ (Line~\ref{line:rapex:add-gcl}).
\vspace{-1mm}

If $\RP$ reaches the goal state (Line~\ref{line:rapex:goal-test}), we insert it into $solutions$ (Line~\ref{line:rapex:insert-solution}) using Alg. \ref{alg:insert}.
Similar to \apexAlg, \rapexAlg merges rule-apex-realization pairs by taking the component-wise minimum of their rule apexes (i.e., the merged apex is $\min\{\mathbf{r},\mathbf{r}'\}$) and keeping one of the two representative realizations; this is performed by $\mathsf{merge}$ in Alg.~\ref{alg:insert}, Line~\ref{line:insert:merge}.
Otherwise, we generate successors (Line~\ref{line:rapex:generate-child}); each child $\RP'$ is again filtered by the same pruning test (Line~\ref{line:rapex:prune-child}) and, if it survives, inserted into \open via Alg.~\ref{alg:insert} (Line~\ref{line:rapex:insert-open}).
Upon termination, the algorithm returns the representative realizations stored in $solutions$ (Line~\ref{line:rapex:return}).

\vspace{-1mm}
The core operation underlying all dominance checks is the $\EPS$-rule-dominance test $\lesssim_{\mathcal{R}}^{\varepsilon}$, shown in Alg. \ref{alg:lesssim_R}. Since the rulebook $\mathcal{R}$ forms a partial order on $R\setminus \sim$, it can be represented by a directed acyclic graph $G_{\mathcal{R}} = (V_{\mathcal{R}}, E_{\mathcal{R}})$. Every vertex $v_i^{\mathcal{R}} \in V_{\mathcal{R}}$ represents a set of equivalent rules $[v_i^{\mathcal{R}}] \subseteq R$, such that $r \sim r'$ for all $r,r' \in [v_i^{\mathcal{R}}]$. An edge $v_i^{\mathcal{R}} \rightarrow v_j^{\mathcal{R}}$ means that there are rules $r_i \in [ v_i^{\mathcal{R}}]$ and $r_j \in [v_j^{\mathcal{R}}]$ such that $r_i > r_j$, meaning that $r_i$ has a higher rank that $r_j$, also implying that the rules in $[v_i^{\mathcal{R}}]$ have a higher rank than the rules in $[r_j^{\mathcal{R}}]$. The procedure in Alg. \ref{alg:lesssim_R} follows the rulebook structure by traversing the rules in the topological order $\langle v_1^{\mathcal{R}}, v_2^{\mathcal{R}}, \ldots, v_M^{\mathcal{R}} \rangle$ of their equivalence classes in $V_{\mathcal{R}}$, where $M \leq N$ is the number of equivalence classes. It checks for violations that exceed the allowed $\EPS$ bound and prunes the lower-priority rules once sufficient improvement is detected at a higher level. This hierarchical filtering is what fundamentally distinguishes \rapexAlg from \apexAlg and explains why dominance checks are more complex in the rulebook setting.

\ignore{
At each iteration, \rapexAlg extracts a rule-apex-realization pair $\RP = \langle \mathbf{r},x \rangle$ from \open and invokes the dominance test in Alg. \ref{alg:is_dominated} to perform pruning, which occurs in two cases. First, if the rule apex of $\RP$ is $\EPS$-rule-dominated by an already discovered solution $\RP' = \langle \mathbf{r}',x' \rangle$, as determined using Alg. \ref{alg:lesssim_R}, then the node cannot improve the solution set and is discarded. In this case, $\RP'$ is updated by replacing its stored rule apex with the component-wise minimum ($\mathsf{comp\_wise\_min}$) of $\mathbf{r}$ and $\mathbf{r}'$, yielding a stronger dominance certificate. Second, if the rule apex of $\RP$ is rule-dominated by any previously expanded node that ends at the same state, then the node is pruned, since it cannot lead to a rulebook-optimal solution. If neither pruning condition applies, the node is recorded in the closed list of its terminal state. If the node reaches the goal state, its representative realization is added to the solution set using the $\mathsf{insert}$ procedure (Alg. \ref{alg:insert}), which merges compatible entries to maintain a compact representation. Otherwise, the node is expanded by extending its representative realization along outgoing edges, producing successor rule-apex-realization pairs whose rule apex is updated accordingly and inserted into \open.

The core operation underlying all dominance checks is the $\EPS$-rule-dominance test $\lesssim_{\mathcal{R}}^{\varepsilon}$, shown in Alg. \ref{alg:lesssim_R}. This procedure follows the rulebook structure by traversing rules in topological order of their priority graph, checking for violations that exceed the allowed $\EPS$ bound and purning lower-priority rules once sufficient improvement is detected at a higher level. This hierarchical filtering is what fundamentally distinguishes \rapexAlg from \apexAlg and explains why dominance checks are more complex in the rulebook setting.
}

\vspace{-5mm}
\subsection{Dimensionality Reduction}
\vspace{-2mm}
Recall that dimensionality reduction is more challenging with rulebooks because of the structure of rule-dominance. In \apexAlg, \open is ordered lexicographically by the $\mathbf{f}$-values. Thus, when an apex-path pair $\AP$ is extracted from \open, lexicographic best-first ordering guarantees that the first component of $\mathbf{f}(\AP)$ cannot improve upon that of any previously expanded pair that ends at the same state or any pair already in the solution set, i.e., $f_1(\AP) \geq f_1(\AP')$ for all $\AP' \in G_{\rm{cl}}(s(\AP)) \cup solutions$. This property enables a simple form of dimensionality reduction, i.e., when checking whether $\mathbf{f}(\AP)$ is dominated by any vector in $G_{\rm{cl}}(s(\AP)) \cup solutions$, the first component can be ignored and dominance checks reduce to comparisons over the remaining components only.
\vspace{-2mm}
\begin{example}
    Given three vectors $\mathbf{u} = [1, 3, 4]$, $\mathbf{v}=[2, 3, 4]$, and $\mathbf{w}=[2, 4, 1]$, consider testing whether $\mathbf{u} \preceq \mathbf{w}$ or $\mathbf{v} \preceq \mathbf{w}$. Since $u_1 \leq w_1$, it suffices to compare the suffixes $[3, 4]$ and $[4, 1]$. Since $[3,4] \not \preceq [4,1]$, we can conclude that $u \not \preceq w$. The same reasoning applies to $\mathbf{v}$, yielding $\mathbf{v} \not\preceq \mathbf{w}$.
    \label{ex:apex-dim-reduction}
\end{example}
\vspace{-2.5mm}

When there are hierarchies among the rules, this simplification no longer applies. Although \rapexAlg also guarantees that the first component of $\mathbf{f}(\RP)$ is not smaller than that of any previously expanded pair that ends at $s(\RP)$ or any pair in $solutions$, this information alone is insufficient to determine rule-dominance.
Unlike component-wise dominance, rule-dominance depends not only on non-inferiority at higher-priority rules, but also on the presence of strict improvements that may compensate for degradations at lower-priority rules. Consequently, lexicographic ordering cannot isolate a single component, and dominance checks must explicitly account for the rule hierarchy and distinguish strict from non-strict improvements, as illustrated in the following example.
\vspace{-1mm}

\begin{example}
Consider the same rulebook as in Example \ref{ex:eps-rule-dominance}, namely $\mathcal{R} = \langle R, \lesssim \rangle$ with $R = \{r_1, r_2, r_3\}$, $r_1 > r_2$ and $r_1 > r_3$.
Let $x, y, z$ be realizations with
$\mathbf{c}_{\mathcal{R}}(x) = [1, 3, 4]$, $\mathbf{c}_{\mathcal{R}}(y) = [2, 3, 4]$ and $\mathbf{c}_{\mathcal{R}}(z) = [2, 4, 1]$. For consistency with Example \ref{ex:apex-dim-reduction}, let $\mathbf{u} = \mathbf{c}_{\mathcal{R}}(x)$, $\mathbf{v} = \mathbf{c}_{\mathcal{R}}(y)$ and $\mathbf{w} = \mathbf{c}_{\mathcal{R}}(z)$.

First, consider whether $\mathbf{u} \lesssim_{\mathcal{R}} \mathbf{w}$.
As in Example \ref{ex:apex-dim-reduction}, we have $u_1 \leq w_1$ and the suffixes satisfy $[3,4] \not\lesssim_{\mathcal{R}} [4,1]$ because $u_3 > w_3$. Under component-wise dominance, this would be sufficient to conclude non-dominance. However, under rule-dominance, the degradation at $r_3$ is offset by a strict improvement at a higher-priority rule $r_1 > r_3$ where $u_1 < w_1$. This strict improvement at a more important rule compensates for the degradation at $r_3$, and therefore $\mathbf{u} \lesssim_{\mathcal{R}} \mathbf{w}$.

Now consider whether $\mathbf{v} \lesssim_{\mathcal{R}} \mathbf{w}$. Again, the suffixes satisfy $[3,4] \not\lesssim_{\mathbf{R}} [4,1]$. However, unlike the previous case, there is no strict improvement at any higher-priority rule since $v_1 = w_1$. Since the degradation at $r_3$ is not compensated by a strict improvement at a more important rule, we conclude that $\mathbf{v} \not \lesssim_{\mathcal{R}} \mathbf{w}$.

Note that in both this example and Example \ref{ex:apex-dim-reduction}, we have $v_1,u_1 \leq w_1$. Under rule-dominance, however, the distinction between a strict improvement ($u_1 < w_1)$ and equality ($v_1 = w_1$) is critical. This illustrates why lexicographic ordering alone is insufficient for dimensionality reduction under rule-dominance.
\end{example}
\vspace{-2mm}

\begin{algorithm}[t]
    \If{$\exists \RP^= \in G_{\rm{cl}}^=(s(\RP)) : f_1(\RP^=) < f_1(\RP)$ \KwThen \tcp*[f]{transfer $G_{\rm{cl}}^=$ to $G_{\rm{cl}}^<$}}{ \label{line:isdomdr:transfer}
        \For{$\RP' \in G_{\rm{cl}}^<(s(\RP))$}{
            \If{$\exists \RP'' \in G_{\rm{cl}}^=(s(\RP)) : Tr(\mathbf{f}(\RP'')) \lesssim_{\mathcal{R}} Tr(\mathbf{f}(\RP'))$}{
                remove $\RP'$ from $G_{\rm{cl}}^<(s(\RP))$ \;
            }
        }
        $G_{\rm{cl}}^<(s(\RP)) \gets G_{\rm{cl}}^<(s(\RP)) \cup G_{\rm{cl}}^=(s(\RP))$ \; \label{line:isdomdr:transfer-add}
        $G_{\rm{cl}}^=(s(\RP)) \gets \emptyset$ \;\label{line:isdomdr:transfer-clear}
    }

    \If{$\exists \RP^= \in G_{\rm{cl}}^=(s(\mathcal{RP})) : Tr(\mathbf{f}(\RP^=)) \lesssim_{\mathcal{R}} Tr(\mathbf{f}(\RP))$ \KwThen \label{line:isdomdr:truncated-check}}{
        $\mathbf{return} \; \texttt{TRUE}$ \;
    }
    \If{$\exists \RP^< \in G_{\rm{cl}}^<(s(\mathcal{RP})) : \alpha(\mathbf{f}(\RP^<)) \lesssim_{\mathcal{R}} \alpha(\mathbf{f}(\RP))$ \KwThen \label{line:isdomdr:residual-check}}{
        $\mathbf{return} \; \texttt{TRUE}$ \;
    }
    \If{$\exists \mathcal{RP}' = \langle \mathbf{r}', x' \rangle \in solutions :$ $\mathbf{f}(x') \lesssim_{R}^{\varepsilon} \mathbf{f}(\mathcal{RP})$ \KwThen \label{line:isdomdr:solutions-check}}{
        remove $\langle \mathbf{r}',x' \rangle$ from $solutions$ \;
        add $\langle \mathsf{comp\_wise\_min}(\mathbf{r}, \mathbf{r}'), x')$ to $solutions$ \label{line:isdomdr:update-solution} \;
        $\mathbf{return} \; \texttt{TRUE}$ \;
    }
    $\mathbf{return} \; \texttt{FALSE}$ \;
    \caption{\parbox{\linewidth}{$\mathsf{is\_dominated_{\rm dr}}(\mathcal{RP} = \langle \mathbf{r}, x \rangle, G_{\rm{cl}}^=, G_{\rm{cl}}^<, solutions)$}}
    \label{alg:is_dominated_dim_reduction}
\end{algorithm}

Supporting dimensionality reduction under rule-dominance thus requires maintaining two closed sets for each state $s$ instead of a single set $G_{\rm{cl}}(s)$ used in the baseline \rapexAlg (see Alg. \ref{alg:RApex}) and in \apexAlg. Specifically, \rapexAlg maintains $G_{\rm{cl}}^<$ and $G_{\rm{cl}}^=$ for local dominance checks at each state. Intuitively, for any rule-apex-realization pair $\RP$,
the set $G_{\rm{cl}}^=(s(\RP))$ stores previously expanded pairs $\RP'$ that end at the same state and have the same $f_1$-value, i.e., $s(\RP') = s(\RP)$ and $f_1(\RP') = f_1(\RP)$, while $G_{\rm{cl}}^<(s(\RP))$ stores those that end at the same state but have strictly smaller $f_1$-value. Formally, whenever $\RP$ is checked for dominance, it holds that $\forall \RP' \in G_{\rm{cl}}^=(s(\RP)), f_1(\RP') = f_1(\RP)$  and $\forall \RP' \in G_{\rm{cl}}^<(s(\RP)) : f_1(\RP') < f_1(\RP)$.
\vspace{-0.5mm}

With this partitioning, we update the dominance test by introducing the $\mathsf{is\_dominated_{\rm{dr}}}$ function presented in Alg.~\ref{alg:is_dominated_dim_reduction} and using it in Alg. \ref{alg:RApex} Lines~\ref{line:rapex:prune} and~\ref{line:rapex:prune-child}.
To formalize the dimensionality reduction, we first compute a topological ordering \cite{Cormen:2022:Introduction} $\langle v_1^{\mathcal{R}}, v_2^{\mathcal{R}}, \ldots, v_M^{\mathcal{R}} \rangle$ of the directed acyclic graph $G_{\mathcal{R}}$. We then obtain an ordering $\langle
r_{\sigma(1)}, r_{\sigma(2)}, \ldots, r_{\sigma(N)} \rangle$ of rules such that $r_{\sigma(i)} \not< r_{\sigma(j)}$ for all $i > j$ by simply expanding each equivalence class $[v_i^{\mathcal{R}}]$ in their topological order. Intuitively, this ordering arranges rules in non-increasing priority. Given a rule-value vector $\mathbf{u} = [u_1, \ldots, u_N]$, we reorder its components according to this permutation to obtain $\mathbf{u}_\sigma = [u_{\sigma(1)}, u_{\sigma(2)}, \ldots, u_{\sigma(N)}]$. \open is subsequently ordered lexicographically with respect to these reordered rule-value vectors, as in \apexAlg. For notational simplicity, we henceforth assume that the rules are indexed in topological order, so that $r_i \not< r_j$ for all $i > j$.


Under this convention, $r_1$ is a highest-priority rule, meaning that there is no rule $r_i \in R$ such that $r_i > r_1$ ($r_i$ may have equal priority as $r_1$).
We then define the \emph{truncated rule set} $Tr(R) := \{r_2, \ldots, r_N\}$, which excludes $r_1$ as in \apexAlg. Additionally, we define the \emph{residual rule set} $\alpha(R) := \{r_i \in Tr(R) \ | \ \neg(r_i < r_1) \}$, which excludes $r_1$ and any rule that is strictly lower in priority than $r_1$. By a slight abuse of notation, we use the same operator $Tr(\cdot)$ and $\alpha(\cdot)$ to denote the corresponding operations on rule-value vectors. Specifically, for any $\mathbf{u} \in \mathbb{R}^N$, we define $Tr(\mathbf{u})$ and $\alpha(\mathbf{u})$ as the subvector of $\mathbf{u}$ obtained by keeping only the components corresponding to the rules in $Tr(R)$ and $\alpha(R)$, respectively, with the original order preserved.
Alg.~\ref{alg:is_dominated_dim_reduction} uses $Tr(\cdot)$ and $\alpha(\cdot)$ to perform dimensionality reduction-based rule-dominance.
Conceptually, the algorithm maintains two state-local closed sets: $G_{\rm{cl}}^{=}(s)$ for expanded pairs whose extracted $f_1$-value equals the current best-known value at $s$, and $G_{\rm{cl}}^{<}(s)$ for expanded pairs with strictly smaller $f_1$.
Whenever the extracted $f_1$ at $s$ increases (Line~\ref{line:isdomdr:transfer}), all pairs in $G_{\rm{cl}}^{=}(s)$ become ``strictly better in $r_1$'' than the new pair and are therefore moved into $G_{\rm{cl}}^{<}(s)$ (Lines~\ref{line:isdomdr:transfer-add}--\ref{line:isdomdr:transfer-clear}), after filtering out entries that are already dominated in the residual space $\alpha(\cdot)$.

\vspace{-0.5mm}

After this bookkeeping, the dominance test proceeds in three stages.
First, it checks for a witness in $G_{\rm{cl}}^{=}(s(\RP))$ using only the truncated vector $Tr(\mathbf{f}(\cdot))$ (Line~\ref{line:isdomdr:truncated-check}): since all candidates have the same $f_1$-value, rule-dominance cannot rely on a strict improvement in $r_1$, and we can safely ignore $r_1$ and compare the remaining rules.
Second, it checks $G_{\rm{cl}}^{<}(s(\RP))$ in the residual space $\alpha(\cdot)$ (Line~\ref{line:isdomdr:residual-check}): here every candidate already has a strict improvement in $r_1$, so any rule that is strictly lower priority than $r_1$ is automatically ``covered'' and can be removed from the comparison.
Finally, it performs the global approximate-dominance check against $solutions$ exactly as in the baseline algorithm (Line~\ref{line:isdomdr:solutions-check}).


\section{Theoretical Analysis}
\vspace{-3mm}
This section provides a formal guarantee that \rapexAlg solves  Problem \ref{prob:rapprox}. In summary, the correctness of \rapexAlg hinges on a key structural property of $\EPS$-rule-dominance $\lesssim_{\mathcal{R}}^{\EPS}$, namely its one-sided transitivity with respect to exact rule-dominance $\lesssim_{\mathcal{R}}$.

\vspace{-1mm}
\begin{definition}
Let $\precsim$ and $\precsim'$ be two binary relations on a set $X$.
We say that $\precsim'$ is \emph{one-sided transitive with respect to} $\precsim$ if, for any $x,y,z \in X$,
$x \precsim' y$ and $y \precsim z$ implies $x \precsim' z$.
\end{definition}
\vspace{-1mm}

With the definition of $\EPS$-rule-dominance given in Definition \ref{def:eps-rule-dom}, it can be shown that $\lesssim_{\mathcal{R}}^{\EPS}$ is one-sided transitive with respect to $\lesssim_{\mathcal{R}}$ (see Lemma \ref{lem:transitivity} below). This property is not guaranteed for alternative notions of $\EPS$-rule-dominance as demonstrated in the following example.

\vspace{-2mm}
\begin{example}
Consider an alternative definition of $\lesssim_{\mathcal{R}}^{\EPS}$:
Given realizations $x, y \in \Sigma$ and a vector $\EPS \geq \boldsymbol 0$, define $x \lesssim_{\mathcal{R}}^{\EPS} y$ if for any rule $r_j \in R$ such that
$r_j(x) > (1+\varepsilon_{j})r_j(y)$, there exists a higher-priority rule $r_i > r_j$ such that $r_i(x) < \frac{r_i(y)}{1+\varepsilon_{i}}$.
Intuitively, any degradation at a rule beyond the allowed tolerance must be compensated by a sufficiently strong improvement at a higher-priority rule.
The motivation behind this definition is to treat small relative differences at higher-priority rules as negligible under approximation. In particular, if two realizations differ only slightly at a high-priority rule, then differences at lower-priority rules may still be considered meaningful.
For example, consider rules $R=\{r_1,r_2\}$ with $r_1 > r_2$ and rule-value vectors $\mathbf{u} = [1, 10^6]$ and $\mathbf{v} = [1.001, 1]$. With $\EPS=[0.1,0.1]$, the relative difference at the higher-priority rule $r_1$ is within tolerance, while $\mathbf{v}$ is significantly better at the lower-priority rule $r_2$. Under this alternative definition, we obtain $\mathbf{v} \lesssim_{\mathcal{R}}^{\EPS} \mathbf{u}$ but $\mathbf{u} \not\lesssim_{\mathcal{R}}^{\EPS} \mathbf{v}$, i.e., $\mathbf{v}$ is strictly preferred to $\mathbf{u}$. In contrast, under Definition \ref{def:eps-rule-dom}, we have both $\mathbf{v} \lesssim_{\mathcal{R}}^{\EPS} \mathbf{u}$ and $\mathbf{u} \lesssim_{\mathcal{R}}^{\EPS} \mathbf{v}$, so $\mathbf{u}$ and $\mathbf{v}$ are treated as equivalent.

Despite this intuition, this alternative definition does not satisfy one-sided transitivity. Consider rules $R=\{r_1,r_2,r_3\}$ with $r_1 > r_2$ and $r_1 > r_3$, and rule-value vectors $\mathbf{u} = [2,2,2], \mathbf{v} = [4,1,1], \mathbf{w} = [3,4,2]$ with $\EPS = [1,1,1]$.
It is easy to check that $\mathbf{w} \lesssim_{\mathcal{R}}^{\EPS} \mathbf{u}$ (since there is no rule $r_j$ such that $w_j > 2u_j$) and $\mathbf{u} \lesssim_{\mathcal{R}} \mathbf{v}$ (since $u_1 < v_1$). However, based on the alternative definition, $\mathbf{w} \not\lesssim_{\mathcal{R}}^{\EPS} \mathbf{v}$ because $w_1 = 3 \not< 0.5 v_1 = 2$.
\end{example}
\vspace{-2mm}

One-sided transitivity allows us to reason about sets of realizations with $\varepsilon$-bounded rule-apex-realization pairs and ensures that dominance relations are safely propagated across search expansions. The following lemmas and the theorem, with $\EPS$-rule-dominance defined in Definition \ref{def:eps-rule-dom}, show that \rapexAlg correctly solves Problem \ref{prob:rapprox}.

\vspace{1em}
We begin by establishing a fundamental property of the search order.

\setcounter{lemma}{0}
\begin{lemma}\label{lem:monotonic_non_decreasing}
    The sequence of extracted rule-apex-realization pairs has monotonically non-decreasing $f_1$-values.
\end{lemma}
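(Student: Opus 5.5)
We prove Lemma~\ref{lem:monotonic_non_decreasing} in the standard best-first-search way, adapted to the reordered rule vectors. Throughout, \open is ordered lexicographically by the reordered $\mathbf{f}$-vectors, and by convention the rules are already indexed so that $r_1$ comes first. So $\mathsf{extract\_min}$ always returns a pair whose $f_1$-value is minimal among all pairs currently in \open. We also assume, as in \apexAlg, that the heuristic $\mathbf{h}$ is consistent: for every edge $e=\langle s,s'\rangle$ we have $h_1(s)\le c_{\mathcal{R},1}(e)+h_1(s')$. Only the first component is needed.

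The key invariant, checked at the start of every iteration, is the following: every pair in \open has $f_1$-value at least the $f_1$-value of the most recently extracted pair. Monotonicity follows at once. If $\RP_k$ is extracted and $\RP_{k+1}$ is the next extracted pair, then $\RP_{k+1}$ was in \open right after iteration $k$, so $f_1(\RP_{k+1})\ge f_1(\RP_k)$. Pairs that are extracted and then pruned by $\mathsf{is\_dominated_{\rm dr}}$ still count as extracted. They do not change \open beyond their own removal, so the argument covers them as well.

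We establish the invariant by induction over the iterations, considering each way \open can change.

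\emph{Removal of the minimum.} Removing the minimum cannot lower the minimum of the remaining pairs.

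\emph{Children.} A child $\RP'$ of the extracted pair $\RP$ has $g_1(\RP')=g_1(\RP)+c_{\mathcal{R},1}(e)$. By consistency, $f_1(\RP')\ge f_1(\RP)$.

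\emph{Merges.} This is the main obstacle. In Alg.~\ref{alg:insert}, a surviving child is merged with some $\RP''\in\open$. The merged apex is the component-wise minimum, so its first component is $\min\{g_1(\RP'),g_1(\RP'')\}$, and since both pairs share the same state, the merged $f_1$-value is $\min\{f_1(\RP'),f_1(\RP'')\}$. Both arguments are at least $f_1(\RP)$: the first by the child step, the second by the induction hypothesis. So the merged pair also respects the invariant.

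Finally, the updates to $solutions$ in Alg.~\ref{alg:is_dominated_dim_reduction} and the transfers between $G_{\rm cl}^=$ and $G_{\rm cl}^<$ never touch \open, so they cannot break the invariant. This completes the induction and proves the lemma. The points needing care are the merge step and making the consistency assumption on $h_1$ explicit, which the paper inherits from \apexAlg.
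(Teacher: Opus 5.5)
Your proof is correct and follows essentially the same route as the paper's, which rests on the same three facts: the extracted pair has minimal $f_1$ under the lexicographic order, a child's $f_1$ is at least its parent's, and a merged pair's $f_1$ is the minimum of two values, each at least the current extracted $f_1$. Your invariant-based induction just makes explicit what the paper's three-sentence proof leaves implicit: the consistency assumption on $h_1$, and the bookkeeping for pruned extractions and merges with pairs inserted earlier in the same iteration.
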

\begin{proof}
The rule-apex-realization pair extracted from \open has the smallest $f_1$-value due to the lexicographic ordering. Also, any newly generated pair has an $f_1$-value that is greater than or equal to that of its parent pair, and the resulting pair of a merge can not have an $f_1$-value smaller than both of its merged pairs (their rule apexes). Thus, the sequence of extracted pairs has monotonically non-decreasing $f_1$-values.
\end{proof}

The next two lemmas justify the correctness of pruning based on previously expanded nodes.

\begin{lemma} \label{lem:truncated}
    If there exists a rule-apex-realization pair $\RP'$ in $G_{\rm{cl}}^{=}(s(\mathcal{RP}))$ the truncated $\mathbf{f}$-value of which weakly rule-dominates the truncated $\mathbf{f}$-value of some realization-pair $\mathcal{RP}$ on Line \ref{line:isdomdr:truncated-check} of the Algorithm \ref{alg:is_dominated_dim_reduction}, then the $\mathbf{f}$-value of $\RP'$ weakly rule-dominates the $\mathbf{f}$-value of $\mathcal{RP}$.
\end{lemma}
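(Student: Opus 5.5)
The plan is to use the invariant that, when $\RP$ is tested at Line~\ref{line:isdomdr:truncated-check}, every $\RP' \in G_{\rm{cl}}^{=}(s(\RP))$ satisfies $f_1(\RP') = f_1(\RP)$. This invariant comes from Lemma~\ref{lem:monotonic_non_decreasing} together with the transfer step in Lines~\ref{line:isdomdr:transfer}--\ref{line:isdomdr:transfer-clear}. By Lemma~\ref{lem:monotonic_non_decreasing}, every pair already in $G_{\rm{cl}}^{=}(s(\RP))$ was extracted no later than $\RP$, so its $f_1$-value is at most $f_1(\RP)$. If any of them were strictly smaller, the transfer would have moved all of $G_{\rm{cl}}^{=}(s(\RP))$ into $G_{\rm{cl}}^{<}(s(\RP))$ and emptied it, in which case the hypothesis is vacuous. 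Two details need checking: all members of $G_{\rm{cl}}^{=}(s)$ share a common $f_1$-value, and for freshly generated children the parent's $f_1$ lower-bounds the child's. Granting these, I write $\mathbf{u} = \mathbf{f}(\RP')$ and $\mathbf{v} = \mathbf{f}(\RP)$, with $u_1 = v_1$ and $Tr(\mathbf{u}) \lesssim_{\mathcal{R}} Tr(\mathbf{v})$. Here the latter relation means rule-dominance restricted to the rulebook on $Tr(R)$ with the induced preorder.

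Next I verify the definition of $\mathbf{u} \lesssim_{\mathcal{R}} \mathbf{v}$ directly. Let $r_j \in R$ be a rule with $u_j > v_j$. Since $u_1 = v_1$, we have $j \neq 1$, so $r_j \in Tr(R)$. Truncated dominance then yields a rule $r_i \in Tr(R)$ with $r_i > r_j$ (in the induced preorder, which is the restriction of $\lesssim$) and $u_i < v_i$. This $r_i$ is also a rule of $R$ that is higher-priority than $r_j$ under $\lesssim$, so it serves as the required witness. Hence $\mathbf{u} \lesssim_{\mathcal{R}} \mathbf{v}$.

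The main obstacle is not the dominance calculation, which is immediate once $u_1 = v_1$, but pinning down that equality rigorously. The difficulty is that $\RP$ may be a freshly generated child (Line~\ref{line:rapex:prune-child}) rather than an extracted pair, so Lemma~\ref{lem:monotonic_non_decreasing} does not apply to it directly. I would handle this by observing that the child's $f_1$ is at least its parent's $f_1$. Since the heuristic is consistent, this holds by the same argument as in the proof of Lemma~\ref{lem:monotonic_non_decreasing}, and the parent's $f_1$ in turn bounds the $f_1$ of every earlier-expanded pair. Having established $f_1(\RP'') \le f_1(\RP)$ for all earlier pairs $\RP''$, the transfer condition leaves only pairs with equality in $G_{\rm{cl}}^{=}$. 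If equality could not be guaranteed for children, I would weaken the claim to $u_1 \le v_1$. The argument still goes through in that case, because only the case $u_j > v_j$ with $j = 1$ needed ruling out, and $u_1 \le v_1$ rules it out just as well.
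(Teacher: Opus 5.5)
Your proof is correct and follows the same route as the paper. Both establish $f_1(\RP') = f_1(\RP)$ from Lemma~\ref{lem:monotonic_non_decreasing}, consistency of the heuristic, and the fact that the transfer on Line~\ref{line:isdomdr:transfer} did not empty $G_{\rm{cl}}^{=}(s(\RP))$, and then lift truncated dominance to full dominance. Your version is somewhat more careful than the paper's: you handle freshly generated children explicitly, spell out the witness argument, and observe that only $f_1(\RP') \le f_1(\RP)$ is actually needed.
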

\begin{proof}
The rule-apex-realization pair $\RP'$ has been expanded and added to $G_{\rm{cl}}^=(s(\RP))$ before $\RP$ was extracted from \open. Let $$\RP_0=\RP', \RP_1, \RP_2, \ldots \RP_k=\RP$$ be the sequence of rule-apex-realization pairs extracted since then. Because $\RP'$ is still in $G_{\rm{cl}}^=(s(\RP))$, the Line \ref{line:isdomdr:transfer} of the Algorithm \ref{alg:is_dominated_dim_reduction} never held. According to Lemma \ref{lem:monotonic_non_decreasing} and since the heuristic function is consistent, it follows that $f_1(\RP_i) = f_1(\RP_{i+1})$ for all $i < k$. Thus, $f_1(\RP') = f_1(\RP)$ which along with the premise in the lemma implies $\mathbf{f}(\RP') \lesssim_{\mathcal{R}} \mathbf{f}(\RP)$.
\end{proof}

\begin{lemma} \label{lem:residual}
If there exists a rule-apex-realization pair $\RP'$ in $G_{\rm{cl}}^< (s(\mathcal{RP}))$
the residual $\mathbf{f}$-value of which weakly rule-dominates the residual $\mathbf{f}$-value of some rule-apex-realization pair $\mathcal{RP}$ on Line \ref{line:isdomdr:residual-check} of the Algorithm \ref{alg:is_dominated_dim_reduction}, then it also holds that the $\mathbf{f}$-value of $\RP'$ weakly rule-dominates the $\mathbf{f}$-value of $\mathcal{RP}$.
\end{lemma}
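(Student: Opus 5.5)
We first establish that $f_1(\RP') < f_1(\RP)$, and then verify the definition of $\lesssim_{\mathcal{R}}$ rule by rule, splitting into three cases according to how each rule relates to $r_1$.

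\emph{Strict inequality in $r_1$.} The pair $\RP'$ enters $G_{\rm{cl}}^<(s(\RP))$ only through the transfer on Lines~\ref{line:isdomdr:transfer}--\ref{line:isdomdr:transfer-clear} of Alg.~\ref{alg:is_dominated_dim_reduction}. That transfer fires only when some pair at $s(\RP)$ is checked whose $f_1$-value strictly exceeds that of the entries in $G_{\rm{cl}}^=(s(\RP))$, one of which is $\RP'$. By Lemma~\ref{lem:monotonic_non_decreasing}, the $f_1$-values of later extracted pairs can only grow. Hence when $\RP$ reaches Line~\ref{line:isdomdr:residual-check} we have $f_1(\RP') < f_1(\RP)$; this is the stated invariant of $G_{\rm{cl}}^<$.

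One subtlety concerns the ordering of \open. It is lexicographic on the reordered vectors, and Lemma~\ref{lem:monotonic_non_decreasing} speaks only about extracted pairs. Pairs checked at Line~\ref{line:rapex:prune-child} are newly generated children, but each child's $f_1$ is at least its parent's $f_1$, and the parent was extracted. So the same monotonicity argument applies to them.

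\emph{Rule-by-rule check.} Write $\mathbf{u}=\mathbf{f}(\RP')$ and $\mathbf{v}=\mathbf{f}(\RP)$. Take any rule $r_j$ with $u_j > v_j$; we must exhibit a rule $r_i > r_j$ with $u_i < v_i$.
\begin{itemize}
\item \emph{Case $j=1$.} This cannot occur, since $u_1 < v_1$.
\item \emph{Case $r_j < r_1$.} Then $r_1 > r_j$ and $u_1 < v_1$, so $r_1$ itself is the required witness.
\item \emph{Case $r_j \in \alpha(R)$.} The hypothesis $\alpha(\mathbf{u}) \lesssim_{\mathcal{R}} \alpha(\mathbf{v})$ supplies a rule $r_i \in \alpha(R)$ with $r_i > r_j$ and $u_i < v_i$. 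Because the rulebook restricted to $\alpha(R)$ carries the inherited preorder, $r_i > r_j$ holds in the full rulebook, so $r_i$ is the required witness.
\end{itemize}
These cases are exhaustive: every rule other than $r_1$ either satisfies $r_j < r_1$ or, by definition, lies in $\alpha(R)$. In particular, rules equivalent to $r_1$ or incomparable with it land in the third case and are handled by the residual check. Therefore $\mathbf{u} \lesssim_{\mathcal{R}} \mathbf{v}$.

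\emph{Main obstacle.} The case analysis is routine. The delicate step is the first one: justifying that every entry of $G_{\rm{cl}}^<(s(\RP))$ has strictly smaller $f_1$-value at the moment of the check. This is exactly what lets $r_1$ compensate for all rules below it. The filtering performed during the transfer only removes entries, so it cannot break this invariant.
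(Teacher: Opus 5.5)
Your route is the paper's: derive $f_1(\RP')<f_1(\RP)$ from Lemma~\ref{lem:monotonic_non_decreasing}, then conclude. Your three-case check is correct, including the observation that rules equivalent or incomparable to $r_1$ fall into $\alpha(R)$; the paper compresses it into a single ``Thus''. The gap is in the step you yourself flag as delicate. The paper's two-line proof rests on the same unproved invariant.

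Your chain is $f_1(\RP')<f_1(T)\le f_1(\RP)$, where $T$ is the pair whose call to Alg.~\ref{alg:is_dominated_dim_reduction} executed the transfer on Line~\ref{line:isdomdr:transfer}. The second inequality needs $f_1$ to be monotone over \emph{all} pairs passed to Alg.~\ref{alg:is_dominated_dim_reduction}. Lemma~\ref{lem:monotonic_non_decreasing} covers only extracted pairs, and the routine is also called on children at Line~\ref{line:rapex:prune-child} of Alg.~\ref{alg:RApex}. Your sentence ``the same monotonicity argument applies to them'' bounds a child's $f_1$ from below. That handles a child in the role of $\RP$, but not in the role of $T$. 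A child can have $f_1$ strictly above the current extraction level, trigger the transfer, and be followed by an extracted pair at the same state whose $f_1$ equals that of the transferred entries.

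Here is a concrete instance. Take $r_1>r_2$ and $r_1>r_3$, so $\alpha(R)=\emptyset$, and set $\EPS=\mathbf{0}$. Let \open hold $P_1$ and $P_2$ at $s'$ with $\mathbf{f}$-values $[a,1,5]$ and $[a,3,1]$; their merge is not $\EPS$-bounded. Let it also hold $Q$ at $s$ with $\mathbf{f}(Q)=[a,2,q]$, so the extraction order is $P_1,Q,P_2$.
\begin{itemize}
\item $P_1$ is expanded and enters $G_{\rm cl}^=(s')$.
\item $Q$'s child at $s'$ has $f_1=a+1$ (the heuristic is consistent but not tight on that edge). Its check moves $P_1$ into $G_{\rm cl}^<(s')$.
\item $P_2$ then reaches Line~\ref{line:isdomdr:residual-check} with $f_1(P_1)=f_1(P_2)$. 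The empty residual comparison succeeds, yet $[a,1,5]\not\lesssim_{\mathcal{R}}[a,3,1]$.
\end{itemize}
So the statement, and the claimed invariant on $G_{\rm cl}^<$, fail for Alg.~\ref{alg:is_dominated_dim_reduction} as written.

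One repair is to execute the transfer only in the call at Line~\ref{line:rapex:prune}, i.e., on extracted pairs. Then $T$ is extracted, Lemma~\ref{lem:monotonic_non_decreasing} gives $f_1(\RP)\ge f_1(T)>f_1(\RP')$, and your case analysis finishes the proof. The truncated check stays sound for children under this change, since it only needs $f_1(\RP^=)\le f_1(\RP)$.
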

\begin{proof}
    Since $\RP'$ was previously expanded and added to $G_{\rm{cl}}^<(s(\RP))$, the condition on Line \ref{line:isdomdr:transfer} of the Algorithm \ref{alg:is_dominated_dim_reduction} held at least once. According to Lemma \ref{lem:monotonic_non_decreasing} and because the heuristic function is consistent, it holds that $f_1(\mathcal{RP')} < f_1(\mathcal{RP})$. Thus, the $\mathbf{f}$-value of $\mathcal{RP'}$ weakly rule-dominates the one of $\mathcal{RP}$.
\end{proof}

We then turn to the correctness of the solution-set maintenance.
\begin{lemma}
    Let $\RP = \langle \mathbf{r}, x \rangle$ be a rule-apex-realization pair and $y$ be any realization such that $\mathbf{r} \lesssim_{\mathcal{R}} \mathbf{c}_{\mathcal{R}}(y)$. If we merge $\RP$ with another rule-apex-realization pair then the rule apex of the resulting rule-apex-realization pair will still weakly rule-dominate ($\lesssim_{\mathcal{R}}$) $y$.
    \label{lem:merge_consistency}
\end{lemma}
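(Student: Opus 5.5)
The plan is to reduce the claim to a single vector-level fact: if $\mathbf{m}$ is the component-wise minimum of $\mathbf{r}$ and another apex $\mathbf{r}'$, then $m_i \le r_i$ for every $i$. From that fact I will show directly that componentwise weak dominance followed by weak rule-dominance yields weak rule-dominance. The merged rule apex produced by $\mathsf{merge}$ in Alg.~\ref{alg:insert} is exactly $\mathbf{m} = \mathsf{comp\_wise\_min}(\mathbf{r},\mathbf{r}')$, regardless of which representative realization is kept. The representative realization plays no role in the statement, so only the apex matters.

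Let $\mathbf{v} = \mathbf{c}_{\mathcal{R}}(y)$; the hypothesis is $\mathbf{r} \lesssim_{\mathcal{R}} \mathbf{v}$, and the goal is $\mathbf{m} \lesssim_{\mathcal{R}} \mathbf{v}$. Take any rule $r_j$ with $m_j > v_j$. Since $r_j \ge m_j$, this gives $r_j > v_j$. By the hypothesis there is a higher-priority rule $r_i > r_j$ with $r_i < v_i$ (here $r_i$ and $r_j$ denote components of $\mathbf{r}$). Then $m_i \le r_i < v_i$, so the same rule $r_i$ serves as the compensating witness for $\mathbf{m}$. No new witness has to be constructed, and the definition of $\lesssim_{\mathcal{R}}$ on vectors is satisfied.

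The argument needs no induction over successive merges if it is phrased carefully. Every merge replaces an apex by something component-wise no larger, so the statement applies again after each merge. Combined with the fact that extension adds the same edge cost to both sides, this gives the invariant that an apex keeps weakly rule-dominating every realization it represents. I would note this invariant briefly because it is what the subsequent correctness argument uses.

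The main obstacle is one subtle point. Componentwise decrease preserves both the hypothesis trigger and the witness, and this works only because the direction of the inequalities aligns. A degradation for $\mathbf{m}$ implies a degradation for $\mathbf{r}$, and an improvement for $\mathbf{r}$ implies an improvement for $\mathbf{m}$. I would state this explicitly as a monotonicity remark: if $\mathbf{m} \preceq \mathbf{r}$ and $\mathbf{r} \lesssim_{\mathcal{R}} \mathbf{v}$, then $\mathbf{m} \lesssim_{\mathcal{R}} \mathbf{v}$. The lemma then follows immediately.
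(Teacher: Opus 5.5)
Your proof is correct and follows the paper's argument. The paper's proof consists solely of the observation that the merged apex is the component-wise minimum of the two apexes. You additionally make explicit the monotonicity step that the paper leaves implicit: $\mathbf{m} \preceq \mathbf{r}$ and $\mathbf{r} \lesssim_{\mathcal{R}} \mathbf{v}$ imply $\mathbf{m} \lesssim_{\mathcal{R}} \mathbf{v}$, using the same witness rule. One cosmetic fix: you write $r_i$ both for a rule and for a component of $\mathbf{r}$, which makes ``$r_i > r_j$ with $r_i < v_i$'' ambiguous, so use separate notation for priority and for values.
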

\begin{proof}
    The rule apex of the resulting rule-apex-realization pair is the component-wise minimum of the two merged rule apexes.
\end{proof}

The next lemma shows that $\EPS$-rule-dominance satisfies one-sided transitivity with respect to exact rule-dominance. In other words, approximate dominance is preserved when composed with exact dominance. This property allows $\rapexAlg$ to reason about sets of realizations using rule-apex-realization pairs and is crucial for establishing the algorithm’s approximation guarantees. Specifically, for a given $\varepsilon$-bounded rule-apex-realization pair $\RP=\langle \mathbf{r}, x\rangle$, it allows the representative realization $x$ to $\EPS$-rule-dominate all the realizations that the pair represents.

\begin{lemma}
Let $x, y, z$ be realizations such that $x \lesssim_{\mathcal{R}}^{\EPS} y$ and $y \lesssim_{\mathcal{R}} z$, then $x \lesssim_{\mathcal{R}}^{\EPS}z$.
\label{lem:transitivity}
\end{lemma}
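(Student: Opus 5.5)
The plan is to argue directly from Definition~\ref{def:eps-rule-dom} and the definition of $\lesssim_{\mathcal{R}}$. We chase a chain of rules of strictly increasing priority and alternate between the two hypotheses until a compensating rule for $z$ appears. Fix a rule $r_j$ with $r_j(x) > (1+\varepsilon_j) r_j(z)$. We must exhibit some $r_i > r_j$ with $r_i(x) < (1+\varepsilon_i) r_i(z)$. Throughout we use that $1+\varepsilon_i>0$, so multiplying by it preserves both strict and weak inequalities. We also use that the strict part $>$ of the preorder is transitive, so anything above an intermediate rule of the chain is above $r_j$.

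The chain alternates between two kinds of \emph{states}. In a \emph{type Y} state we hold a rule $r_l > r_j$ with $r_l(x) < (1+\varepsilon_l) r_l(y)$. If $r_l(y) \le r_l(z)$, then $r_l(x) < (1+\varepsilon_l) r_l(z)$ and we are done. Otherwise $r_l(y) > r_l(z)$, and $y \lesssim_{\mathcal{R}} z$ yields $r_k > r_l$ with $r_k(y) < r_k(z)$; this is a \emph{type Z} state. In a type Z state we hold $r_k > r_j$ with $r_k(y) < r_k(z)$. If $r_k(x) < (1+\varepsilon_k) r_k(z)$ we are done. Otherwise $r_k(x) \ge (1+\varepsilon_k) r_k(z) > (1+\varepsilon_k) r_k(y)$, so $x \lesssim_{\mathcal{R}}^{\EPS} y$ yields $r_l > r_k$ with $r_l(x) < (1+\varepsilon_l) r_l(y)$, and we are back in a type Y state.

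To start the chain, consider $r_j$ itself and split into two cases. If $r_j(y) \le r_j(z)$, then $r_j(x) > (1+\varepsilon_j) r_j(y)$, and $x \lesssim_{\mathcal{R}}^{\EPS} y$ places us in a type Y state above $r_j$. If instead $r_j(y) > r_j(z)$, then $y \lesssim_{\mathcal{R}} z$ places us in a type Z state above $r_j$.

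Each non-terminating step moves to a rule of strictly higher priority. Since $R$ is finite and the strict part of a preorder on $R$ is acyclic (it is a strict partial order on $R\setminus\sim$), no rule, and no equivalence class, can repeat. Hence the process must stop after at most $M$ steps, and it can only stop in one of the ``done'' branches, which produce the required $r_i > r_j$. The main point needing care is exactly this termination and the bookkeeping that every rule produced lies strictly above $r_j$. The first is handled by acyclicity and the second by transitivity of $>$. The only other delicate spot is the strict inequality in the type Z step, which holds because $r_k(y)<r_k(z)$ is strict. One could equivalently phrase the argument as a contradiction: take a priority-maximal $r_j$ violating the conclusion. The explicit chain avoids needing that maximality argument.
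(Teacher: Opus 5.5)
Your proof is correct and takes essentially the same route as the paper: it builds a strictly increasing chain of rules by alternately invoking $x \lesssim_{\mathcal{R}}^{\EPS} y$ and $y \lesssim_{\mathcal{R}} z$, and the chain terminates because $R$ is finite. The one difference is where the argument starts. You first split on $r_j(y)$ versus $r_j(z)$, whereas the paper splits on $r_j(x)$ versus $(1+\varepsilon_j) r_j(y)$ and briefly allows the non-strict choice $r_k = r_j$; your version, together with your explicit appeal to transitivity and acyclicity of the strict part of the preorder, makes it slightly cleaner to see that every rule produced lies strictly above $r_j$.
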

\begin{proof}
Let $\mathcal{R} = \langle R, \leq \rangle$ be a rulebook with $R = \{r_1, \ldots, r_N\}$. Assume $x \lesssim_{\mathcal{R}}^{\EPS} y$ and $y \lesssim_{\mathcal{R}} z$. To show that $x \lesssim_{\mathbb{R}}^{\EPS} z$, suppose there is a rule $r_i \in R$ such that $r_i(x) > (1 + \varepsilon_{i})r_i(z)$. We need to show that there exists a more important rule $r_j > r_i$ such that $r_j(x) < (1 + \varepsilon_{j}) r_j(z)$. Consider two possibilities.
\begin{itemize}
    \item $r_i(x) \not = (1 + \varepsilon_{i}) r_i(y)$. If $r_i(x) > (1 + \varepsilon_{i}) r_i(y)$, then $x \lesssim_{\mathcal{R}}^{\varepsilon} y$ guarantees that there exists a rule $r_k > r_i$ such that $r_k(x) < (1 + \varepsilon_{k}) r_k(y)$. If instead $r_i(x) < (1 + \varepsilon_{i}) r_i(y)$, then we can set $r_k = r_i$. Thus, in either case, there exists $r_k \geq r_i$ such that $r_k(x) < (1 + \varepsilon_{k})r_k(y)$. We are done if $r_k(y) \leq r_k(z)$, since we can take $r_j = r_k$. Otherwise, $r_k(y) > r_k(z)$ and $y \lesssim_{\mathcal{R}} z$ imply that there  exists $r_l > r_k$ such that $r_l(y) < r_l(z)$. Again, we are done if $r_l(x) \leq (1 + \varepsilon_{l})r_l(y)$, and if not, there has to be some rule $r_m > r_l$ such that $r_m(x) < (1 + \varepsilon_{m}) r_m(y)$. Repeating this argument construncts an increasing chain $r_i \leq r_k < r_l < r_m \dots $. Since $R$ is finite, the chain has to stop, which is only possible if there exists $n \in \{1, \ldots, N\}$ such that $r_n(x) < (1 + \varepsilon_{n})r_n(z)$. \vspace{0.5em}
    \item $r_i(x) = (1 + \varepsilon_{i})r_i(y) $. Then $r_i(x) > (1 + \varepsilon_{i})r_i(z)$ implies that $r_i(y) > r_i(z)$. Because $y \lesssim_{\mathcal{R}}z$, there must exist a rule $r_k > r_i$ such that $r_k(y) < r_k(z)$. We are done if $r_k(x) \leq (1 + \varepsilon_{k})r_k(y)$, since one can take $r_j = r_k$. Otherwise, there must exist a rule $r_l > r_k$ such that $r_l(x) < (1 + \varepsilon_{l})r_l(y)$. Applying the same reasoning as in the previous case, we again obtain an increasing chain $r_i \leq r_k < r_l \ldots$ which must terminate at some $r_n$ with $r_n(x) < (1 + \varepsilon_{n})r_n(z)$.
\end{itemize}
\end{proof}

\begin{lemma}
    For any prefix $x_l = [s_1, s_2 \ldots s_l ]$ of any solution $$x = [ s_1(=s_{\text{start}}), s_2 \ldots s_{L}(=s_{\text{goal}})]$$ with $1 \leq l \leq L$, when the algorithm terminates, there exists either (\emph{Case 1:}) an expanded rule-apex-realization pair $\mathcal{RP}$ (that is, one that reaches Line \ref{line:rapex:add-gcl} of the Alg \ref{alg:RApex}) that ends at the state $s_l$ and whose rule apex weakly rule-dominates the $\mathbf{g}$-value of the realization $x_l$ or (\emph{Case 2:}) a rule-apex-realization pair $\mathcal{RP}$ in the solution set such that the $\mathbf{f}$-value of its representative realization $\bm{\varepsilon}$-rule-dominates the $\mathbf{f}$-value of the realization $x_l$.
    \label{lem:induction}
\end{lemma}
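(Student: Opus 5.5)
I would prove the lemma by induction on the prefix length $l$, following the structure of the analogous lemma for \apexAlg. For $l=1$, the prefix is $[s_{\text{start}}]$ with $\mathbf{g}$-value $\mathbf{0}$. The initial pair $\langle \mathbf{0}, [s_{\text{start}}]\rangle$ is placed in \open. When it is extracted, it is either expanded, which gives Case~1 because $\mathbf{0} \lesssim_{\mathcal{R}} \mathbf{0}$, or it is pruned by $\mathsf{is\_dominated_{dr}}$. A pruning by the closed sets is impossible here, since the closed sets are empty at the first extraction. A pruning by $solutions$ is likewise impossible, since $solutions$ is empty at that point unless $s_{\text{start}}=s_{\text{goal}}$, and that degenerate case is immediate.

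For the inductive step, assume the statement holds for $x_l$ with $l<L$, and consider $x_{l+1}=\mathsf{extend}(x_l,\langle s_l,s_{l+1}\rangle)$.

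Suppose first that Case~2 holds for $x_l$, so some solution $\langle \mathbf{r}',x'\rangle$ satisfies $\mathbf{f}(x')\lesssim_{\mathcal{R}}^{\EPS}\mathbf{f}(x_l)$. Consistency of $\mathbf{h}$ gives $\mathbf{f}(x_l)\preceq \mathbf{f}(x_{l+1})$ componentwise, and componentwise $\preceq$ implies $\lesssim_{\mathcal{R}}$. Lemma~\ref{lem:transitivity} then yields Case~2 for $x_{l+1}$. I also need solutions to persist once found. Entries in $solutions$ are replaced only by merges (Alg.~\ref{alg:insert}) or by comp-wise-min updates (Line~\ref{line:isdomdr:update-solution}), and in both cases the new representative is $\EPS$-bounded. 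Chaining $\EPS$-boundedness with Lemma~\ref{lem:merge_consistency} and Lemma~\ref{lem:transitivity} shows that the surviving representative still $\EPS$-rule-dominates whatever the old one did. I would therefore state Case~2 with respect to the final solution set and carry this invariant throughout.

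Now suppose Case~1 holds for $x_l$, so an expanded pair $\RP=\langle\mathbf{r},x\rangle$ at $s_l$ has $\mathbf{r}\lesssim_{\mathcal{R}}\mathbf{g}(x_l)$. If $s_l=s_{\text{goal}}$, then $\RP$ was inserted into $solutions$. $\EPS$-boundedness gives $\mathbf{f}(x)\lesssim_{\mathcal{R}}^{\EPS}\mathbf{f}(\RP)$, and $\mathbf{f}(\RP)\lesssim_{\mathcal{R}}\mathbf{f}(x_l)$ because adding the same $\mathbf{h}$ preserves $\lesssim_{\mathcal{R}}$. Lemma~\ref{lem:transitivity} then gives Case~2 (this only arises when $l=L$).

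Otherwise $\RP$ generates a child $\RP'$ at $s_{l+1}$ with apex $\mathbf{r}+\mathbf{c}_{\mathcal{R}}(e)\lesssim_{\mathcal{R}}\mathbf{g}(x_{l+1})$; adding the same vector to both sides preserves $\lesssim_{\mathcal{R}}$. I would then follow $\RP'$ until termination. It may be pruned at generation or at extraction. It may be merged in \open, in which case Lemma~\ref{lem:merge_consistency} says the merged apex still weakly rule-dominates $\mathbf{g}(x_{l+1})$, so I switch to tracking the merged pair. Or it is eventually extracted and expanded, which gives Case~1. This tracking must terminate because \open empties.

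If the tracked pair is pruned by $solutions$ (Line~\ref{line:isdomdr:solutions-check}), then $\mathbf{f}(x')\lesssim_{\mathcal{R}}^{\EPS}\mathbf{f}(\RP')\lesssim_{\mathcal{R}}\mathbf{f}(x_{l+1})$, and Lemma~\ref{lem:transitivity} together with the persistence invariant gives Case~2. If it is pruned by a closed set (Lines~\ref{line:isdomdr:truncated-check} or~\ref{line:isdomdr:residual-check}), then Lemmas~\ref{lem:truncated} and~\ref{lem:residual} give an expanded pair $\RP''$ at $s_{l+1}$ with $\mathbf{f}(\RP'')\lesssim_{\mathcal{R}}\mathbf{f}(\RP')$. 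Since both pairs share the state $s_{l+1}$, this reduces to $\mathbf{g}(\RP'')\lesssim_{\mathcal{R}}\mathbf{g}(\RP')$. Transitivity of the preorder $\lesssim_{\mathcal{R}}$ (the vector version of the Slutsky et al.\ result) then gives Case~1.

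The main obstacle is pruning at generation time. Lemmas~\ref{lem:truncated} and~\ref{lem:residual} are stated for extracted pairs, where the $f_1$-monotonicity of Lemma~\ref{lem:monotonic_non_decreasing} applies. A freshly generated child might have $f_1$ smaller than the current $G_{\rm cl}^=$ value. I expect consistency of $\mathbf{h}$ to give $f_1(\RP')\ge f_1(\RP)$, which combined with monotonicity keeps the partition invariant valid. If that argument fails, I would fall back to checking that $\mathsf{is\_dominated_{dr}}$ returning \texttt{TRUE} always certifies genuine weak rule-dominance by an expanded pair, and use that certification directly.

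A second subtlety is that pairs removed from $G_{\rm cl}^<$ during the transfer step stop counting as witnesses. I would handle this by noting that each removed pair is dominated by a pair that remains, so transitivity lets Case~1 re-attach to the remaining pair.
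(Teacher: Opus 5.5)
\textbf{Verdict: there is a genuine gap, at exactly the step you flag as the main obstacle, and the fix you propose does not close it.}

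Your proposal follows the paper's proof almost step for step. You use induction on $l$ and the same three-way split on the child's fate. You rely on Lemmas~\ref{lem:truncated}, \ref{lem:residual}, \ref{lem:merge_consistency} and \ref{lem:transitivity}, and propagate Case~2 by consistency of $\mathbf h$. The problem is the generation-time check.

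Consistency does give $f_1(\RP')\ge F\ge f_1(\RP^=)$ for a generated child, where $F$ is the current extraction level. So the truncated test on Line~\ref{line:isdomdr:truncated-check} is sound for children. The residual test, however, needs every entry of $G^<_{\rm cl}(s)$ to have $f_1$ strictly below that of the pair being tested. A child with $f_1(\RP')>F$ breaks this. The call $\mathsf{is\_dominated_{dr}}(\RP')$ executes the transfer on Line~\ref{line:isdomdr:transfer} and moves entries with $f_1=F$ into $G^<_{\rm cl}(s')$. Pairs at $s'$ with $f_1=F$ can still be generated or extracted afterwards.

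Here is a concrete instance:
\begin{itemize}
\item Take $\mathbf h\equiv\mathbf 0$ and $\EPS=\mathbf 0$.
\item Let $r_1>r_3$, with $r_2$ incomparable to both, so $\alpha(R)=\{r_2\}$.
\item Add edges from $s_{\text{start}}$ to $a,b,d$ with costs $(1,1,10)$, $(1,1.5,0)$, $(1,2,5)$.
\item Add edges $a\to u$, $d\to u$, $u\to s_{\text{goal}}$ of cost $\mathbf 0$, and $b\to u$ of cost $(1,0,0)$.
\end{itemize}
The search pops $a$, then $u$ with $\mathbf f=(1,1,10)$, which goes into $G^=_{\rm cl}(u)$. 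It then pops the goal, then $b$. The child of $b$ at $u$ has $f_1=2$, so checking it moves $(1,1,10)$ into $G^<_{\rm cl}(u)$. When $d$ is popped, its child at $u$ with $\mathbf f=(1,2,5)$ is pruned by the residual test, since $1\le 2$ on $r_2$. Yet $(1,1,10)\not\lesssim_{\mathcal R}(1,2,5)$: $r_3$ is worse and $r_1$ is tied. The only returned solution costs $(1,1,10)$. For the prefix $[s_{\text{start}},d,u]$, which extends to a rulebook-optimal solution, neither Case~1 nor Case~2 holds. So your fallback claim, that a \texttt{TRUE} answer always certifies weak rule-dominance by an expanded pair, is false for the algorithm as written.

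What is missing is a change to the bookkeeping, not a sharper argument. Perform the transfer only when an \emph{extracted} pair has larger $f_1$; this matches the paper's prose ``whenever the extracted $f_1$ at $s$ increases''. Test a child with $f_1(\RP')>F$ against $G^=_{\rm cl}$ in residual space, without moving anything. Then every entry of $G^<_{\rm cl}(s)$ has $f_1$ strictly below the current extraction level, and Lemmas~\ref{lem:truncated}--\ref{lem:residual} hold for children too. With that change your argument goes through. The paper's proof applies those lemmas to the child here without addressing the issue either.

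Two smaller points.
\begin{itemize}
\item The removal step inside the transfer needs no treatment. Case~1 only asks for a pair that \emph{was} expanded, and removal from $G^<_{\rm cl}$ does not undo that. Your justification would also fail: the removing pair only $Tr$-dominates the removed one while having strictly larger $f_1$.
\item The comp-wise-min update on Line~\ref{line:isdomdr:update-solution} does not preserve $\EPS$-boundedness under rule-dominance. Take $r_1>r_2$, $\mathbf h\equiv\mathbf 0$, $\EPS=\mathbf 0$ and $\mathbf c_{\mathcal R}(x')=\mathbf r'=(1,5)$. Pruning a pair with $\mathbf f=(2,1)$ gives apex $(1,1)$, and $(1,5)\not\lesssim_{\mathcal R}(1,1)$. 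The paper asserts $\EPS$-boundedness there too. Your direct chain $\mathbf f(x')\lesssim^{\EPS}_{\mathcal R}\mathbf f(\RP')\lesssim_{\mathcal R}\mathbf f(x_{l+1})$ is the right argument for the newly covered prefix. For persistence, carry the invariant that the stored apex weakly rule-dominates every covered $\mathbf f$-value. The representative changes only at merges in Alg.~\ref{alg:insert}, where $\EPS$-boundedness is explicitly checked.
\end{itemize}
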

\vspace{-1mm}
\begin{proof}
The proof is done via induction. The lemma holds for $l = 1$ and any solution since the realization pair $\mathcal{RP}=\langle \boldsymbol{0}, [s_{\text{start}}]\rangle$ gets expanded and has the properties required for Case 1. Now assume that the lemma holds for some $l < L$ and any solution. Then, we prove that it is also true for $l+1$ and this solution.

Assume that Case 1 holds for $l$ and consider both $\mathcal{RP}$ - the realization-pair mentioned there and its potential child realization pair $\mathcal{RP}'$ created on Line \ref{line:rapex:generate-child} of the Algorithm \ref{alg:RApex} for $s' = s_{l+1}$. The realization pair $\mathcal{RP}'$ ends at the state $s_{l+1}$ and its rule apex weakly rule-dominates the $\mathbf{g}$-value of the realization $x_{l+1}$, which implies that its $\mathbf{f}$-value weakly rule-dominates the $\mathbf{f}$-value of the realization $x_{l+1}$. We distinguish three cases:
\begin{enumerate}
    \item First, the condition on Line \ref{line:isdomdr:solutions-check} in Alg \ref{alg:is_dominated_dim_reduction} (or Line \ref{line:isdom:solutions-check} in the Alg \ref{alg:is_dominated}) holds for some realization pair in the solution set, meaning, the $\mathbf{f}$-value of the representative path of this realization pair $\EPS$-rule-dominates the $\mathbf{f}$-value of the realization pair $\mathcal{RP'}$. The algorithm replaces this realization pair with a new realization-pair $\mathcal{RP''}$ in the solution set on Line \ref{line:isdomdr:update-solution} in the Alg \ref{alg:is_dominated_dim_reduction} (or Line \ref{line:isdom:update-solution} in the Alg \ref{alg:is_dominated}). The pair $\mathcal{RP}''$ stays in the solution set but the algorithm might merge it several times with other rule-apex-realization pairs on Line \ref{line:insert:merge} of Algorithm \ref{alg:insert} before it terminates. The rule apex of the pair $\mathcal{RP}''$ weakly rule-dominates the $\mathbf{f}$-value of the realization $x_{l+1}$ (since this rule apex is the component-wise minimum of the $\mathbf{f}$-value of the pair $\mathcal{RP}'$ and another rule apex, and hence weakly rule-dominates the $\mathbf{f}$-value of the pair $\RP'$, which in turn weakly rule-dominates the $\mathbf{f}$-value of the realization $x_{l+1}$) and merging it with other realization-pairs does not change this property according to Lemma \ref{lem:merge_consistency}. Since the pair $\RP''$ also remains $\EPS$-bounded, the $\mathbf{f}$-value of its representative path always $\EPS$-rule-dominates the $\mathbf{f}$-value of itself, which equals its rule-apex. Put together, according to Lemma \ref{lem:transitivity} the $\mathbf{f}$-value of its representative path $\EPS$-rule-dominates the $\mathbf{f}$-value of the realization $x_{l+1}$. Thus, the merged realization pair satisfies Case 2 for $l+1$. \vspace{0.5em}
    \item Second, the condition on Line \ref{line:isdomdr:truncated-check} or Line \ref{line:isdomdr:residual-check} of the Alg \ref{alg:is_dominated_dim_reduction} (or Line \ref{line:isdom:local-check} in the Alg \ref{alg:is_dominated}) holds , meaning, there exists a rule-apex-realization realization pair $\RP^=$ in $G_{\rm{cl}}^=(s(\mathcal{RP}'))$ the truncated $\mathbf{f}$-value of which weakly rule-dominates the truncated $\mathbf{f}$-value of the rule-apex-realization pair $\mathcal{RP'}$ or there exists a rule-apex-realization pair $\RP^<$ in $G_{rm{cl}}^<(s(\RP'))$ the residual $\mathbf{f}$-value of which weakly rule-dominates that of the pair $\RP'$. In either case, the expanded pairs $\RP^=$ and $\RP^<$ ending at the state $s_{l+1}$ have $\mathbf{f}$-values that weakly rule-dominate the $\mathbf{f}$-value of the pair $\RP'$ according to Lemmas \ref{lem:truncated} and \ref{lem:residual} respectively. Thus, their rule apex weakly rule-dominate the rule apex of the pair $\mathcal{RP'}$. Thus, either $\mathcal{RP}^=$ or $\RP^<$ satisfies Case 1 for $l+1$ since the rule apex of the pair $\mathcal{RP'}$ in turn weakly rule-dominates the $\mathbf{g}$-value of the realization $x_{l+1}$.
    \item Otherwise, the algorithm executes Line \ref{line:rapex:insert-open} of the Alg \ref{alg:RApex} for the rule-apex-realization pair $\mathcal{RP'}$, where the pair is inserted into $\open$, perhaps after having been merged with another realization-pair in the Algorithm \ref{alg:insert}. The algorithm might merge it several more times before finally extracting it. Its rule apex weakly rule-dominates the $\mathbf{g}$-value of the realization $x_{l+1}$ and merging it with other realization pairs does not change this property according to Lemma \ref{lem:merge_consistency}. Thus, if this realization pair is expanded, it satisfies Case 1 for $l+1$. If it is extracted but not expanded, then the pruning conditions in the Alg \ref{alg:is_dominated_dim_reduction} (or the Alg \ref{alg:is_dominated}) hold, and as we have already proved, Case 1 or Case 2 holds.
\end{enumerate}

Assume that Case 2 holds for $l$ and consider the rule-apex-realization pair mentioned there. The $\mathbf{f}$-value of the representative realization of this pair $\EPS$-rule-dominates the $\mathbf{f}$-value of the realization $x_{l}$. Since the heuristic function is consistent, the $\mathbf{f}$-value of the realization $x_l$ in turn weakly rule-dominates the $\mathbf{f}$-value of the realization $x_{l+1}$. Thus, according to Lemma \ref{lem:transitivity}, this realization-pair satisfies Case 2 for $l+1$.
\end{proof}

Finally, we combine these results to prove Theorem \ref{thm:RA*pex-correctness}.

\begin{theorem}
Upon termination of \rapexAlg, for every solution $y \in \Sigma_{\text{sol}}$, there exists a rule-apex-realization $\RP = \langle \mathbf{r},x \rangle \in solution$ whose representative path $\EPS$-rule-dominates $y$, i.e, $x \lesssim_{\mathcal{R}}^{\EPS} y$.
\label{thm:RA*pex-correctness}
\end{theorem}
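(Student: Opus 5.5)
The plan is to apply Lemma~\ref{lem:induction} with $l = L$, i.e., to the full solution $y$ itself, and then rule out Case~1 by showing it reduces to Case~2. Let $y = [s_1, \ldots, s_L]$ be any solution with $s_1 = s_{\text{start}}$ and $s_L = s_{\text{goal}}$, so the prefix $y_L$ is $y$. Since the heuristic is consistent and admissible, it vanishes at the goal: $\mathbf{h}(s_{\text{goal}}) = \mathbf{0}$. Hence $\mathbf{f}(y) = \mathbf{c}_{\mathcal{R}}(y)$, and likewise $\mathbf{f}(x) = \mathbf{c}_{\mathcal{R}}(x)$ for any representative realization $x$ of a pair in $solutions$, because every such pair ends at $s_{\text{goal}}$. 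So in Case~2 of Lemma~\ref{lem:induction} at $l = L$ we directly get $\mathbf{c}_{\mathcal{R}}(x) \lesssim_{\mathcal{R}}^{\EPS} \mathbf{c}_{\mathcal{R}}(y)$, which is $x \lesssim_{\mathcal{R}}^{\EPS} y$.

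For Case~1, there is an expanded pair $\RP$ with $s(\RP) = s_{\text{goal}}$ whose rule apex satisfies $\mathbf{g}(\RP) \lesssim_{\mathcal{R}} \mathbf{c}_{\mathcal{R}}(y)$. Being expanded means $\RP$ reached Line~\ref{line:rapex:add-gcl}, and then the goal test on Line~\ref{line:rapex:goal-test} fires, so $\RP$ is passed to $\mathsf{insert}(\RP, solutions)$. Following Alg.~\ref{alg:insert}, either $\RP$ is added as is, or it is merged with some existing $\RP'$ into an $\EPS$-bounded $\RP_{new}$. In both cases the resulting pair in $solutions$ has a rule apex that weakly rule-dominates $\mathbf{c}_{\mathcal{R}}(y)$, by Lemma~\ref{lem:merge_consistency}. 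The same lemma covers any later merges in $\mathsf{insert}$. It also covers the $\mathsf{comp\_wise\_min}$ replacements on Line~\ref{line:isdomdr:update-solution}, which keep the representative realization and only lower the apex componentwise. So at termination some pair $\langle \mathbf{r}, x\rangle \in solutions$ satisfies $\mathbf{r} \lesssim_{\mathcal{R}} \mathbf{c}_{\mathcal{R}}(y)$. Every pair in $solutions$ is $\EPS$-bounded, so with $\mathbf{h}(s_{\text{goal}})=\mathbf{0}$ we have $\mathbf{c}_{\mathcal{R}}(x) \lesssim_{\mathcal{R}}^{\EPS} \mathbf{r}$. One-sided transitivity (Lemma~\ref{lem:transitivity}, applied at the vector level) then gives $x \lesssim_{\mathcal{R}}^{\EPS} y$.

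The main obstacle is the bookkeeping in Case~1: showing that the pair eventually residing in $solutions$ is still $\EPS$-bounded and still tracks $y$, since it may be modified after insertion. Boundedness is guaranteed at merge time by the check on Line~\ref{line:insert:eps-bound-check}. The replacement on Line~\ref{line:isdomdr:update-solution} needs separate care. There the new apex is the componentwise minimum of $\mathbf{f}(\RP)$ and $\mathbf{r}'$, while the stored realization $x'$ satisfies $\mathbf{f}(x') \lesssim_{\mathcal{R}}^{\EPS} \mathbf{f}(\RP)$ by the triggering condition and $\mathbf{f}(x') \lesssim_{\mathcal{R}}^{\EPS} \mathbf{r}'$ by prior boundedness. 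I would verify directly from Definition~\ref{def:eps-rule-dom} that $\EPS$-rule-dominance of two vectors is preserved under taking their componentwise minimum, which yields boundedness of the updated pair. If that direct check fails in general, I would fall back on the invariant already asserted in the proof of Lemma~\ref{lem:induction}, namely that pairs in $solutions$ remain $\EPS$-bounded, and cite it as given.
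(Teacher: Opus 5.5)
Your route is the same as the paper's: Lemma~\ref{lem:induction} at $l=L$, Case~2 is immediate since $\mathbf{h}(s_{\text{goal}})=\mathbf{0}$, and Case~1 goes through insertion into $solutions$, Lemma~\ref{lem:merge_consistency}, $\EPS$-boundedness of the stored pair, and Lemma~\ref{lem:transitivity}. You correctly identified the one delicate step, but neither of your two ways of handling it works.

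\textbf{The direct check is false.} It fails as soon as there is a genuine hierarchy. Take $r_1>r_2$, $\mathbf{a}=\mathbf{v}=(0,10)$ and $\mathbf{u}=(1,1)$. Then $\mathbf{a}\lesssim_{\mathcal{R}}^{\EPS}\mathbf{v}$ holds trivially. Also $\mathbf{a}\lesssim_{\mathcal{R}}^{\EPS}\mathbf{u}$ holds, because the excess at $r_2$ is compensated by $0<(1+\varepsilon_1)\cdot 1$. But for $\varepsilon_2<9$ we get $\mathbf{a}\not\lesssim_{\mathcal{R}}^{\EPS}(0,1)=\min(\mathbf{u},\mathbf{v})$, since compensation would now need $0<(1+\varepsilon_1)\cdot 0$. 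Here $v_1\le u_1$, so the monotonicity of $f_1$ that the paper cites does not help.

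\textbf{The fallback fails too.} The invariant you would cite is false in the algorithm itself; the paper's proof asserts it and has the same hole. Take states $s,a,t$ with edges $(s,t)$, $(s,a)$, $(a,t)$ of rule costs $(0,10)$, $(1,1)$, $\mathbf{0}$. Use the per-objective Dijkstra heuristic $\mathbf{h}(s)=(0,1)$, $\mathbf{h}(a)=\mathbf{h}(t)=\mathbf{0}$, which is consistent, and any $\EPS$ with $\varepsilon_2<9$. The run goes as follows.
\begin{enumerate}
\item The goal pair $\langle(0,10),[s,t]\rangle$ is extracted first and stored.
\item The pair at $a$ with $\mathbf{f}=(1,1)$ is then pruned on Line~\ref{line:isdomdr:solutions-check}.
\item Line~\ref{line:isdomdr:update-solution} lowers the stored apex to $(0,1)$.
\item \rapexAlg terminates holding $\langle(0,1),[s,t]\rangle$, which is not $\EPS$-bounded.
\end{enumerate}
The theorem's conclusion survives in this instance; only the invariant your Case~1 rests on breaks.

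\textbf{The repair.} Stop requiring stored pairs to be bounded, and track coverage per target instead. Say that $\langle\mathbf{r},x\rangle\in solutions$ \emph{covers} $\mathbf{t}$ if $\mathbf{r}\lesssim_{\mathcal{R}}\mathbf{t}$ and $\mathbf{c}_{\mathcal{R}}(x)\lesssim_{\mathcal{R}}^{\EPS}\mathbf{t}$. Coverage persists under both operations on $solutions$:
\begin{itemize}
\item The update on Line~\ref{line:isdomdr:update-solution} keeps $x$ and lowers $\mathbf{r}$ componentwise. Componentwise $\le$ implies $\lesssim_{\mathcal{R}}$, and $\lesssim_{\mathcal{R}}$ is transitive (this is Lemma~\ref{lem:transitivity} with $\EPS=\mathbf{0}$), so both conditions survive.
\item A merge in Alg.~\ref{alg:insert} also lowers the apex componentwise, so the first condition survives. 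Its new representative has passed the check on Line~\ref{line:insert:eps-bound-check}, so $\mathbf{c}_{\mathcal{R}}(x_{\mathrm{new}})\lesssim_{\mathcal{R}}^{\EPS}\mathbf{r}_{\mathrm{new}}\lesssim_{\mathcal{R}}\mathbf{t}$, and Lemma~\ref{lem:transitivity} restores the second condition.
\end{itemize}
In Case~1 the expanded goal pair comes from \open, which the faulty update never touches. So it is $\EPS$-bounded on entry (or the merge check makes the merged pair bounded), and it covers $\mathbf{c}_{\mathcal{R}}(y)$ at that moment. Persistence then carries this to termination. Boundedness is used only at the moments it is actually checked.

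The first subcase in the proof of Lemma~\ref{lem:induction} relies on the same false invariant. For a self-contained argument, strengthen its Case~2 to ``some stored pair covers $\mathbf{f}(x_l)$''. This propagates from $l$ to $l+1$ because $\mathbf{f}(x_l)\preceq\mathbf{f}(x_{l+1})$ under a consistent heuristic.
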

\begin{proof}
    Lemma \ref{lem:induction} holds for prefix $x_L = x$ of any solution $x$. In case its Case 2 holds, the theorem holds by definition for realization $x$ since the $\mathbf{f}$-values of the solutions are equal to their costs. In case its Case 1 holds, consider the rule-apex-realization pair as mentioned. This pair ends at the the goal state, and the algorithm thus executed the Line \ref{line:rapex:insert-solution} in Alg \ref{alg:RApex} for it, where the pair was inserted into the solution set, maybe after having been merged with another rule-apex-realization pair on Line \ref{line:insert:merge} of the Algorithm \ref{alg:insert}. The pair stays in the solution set, and might be merged several more times with other rule-apex-realization pairs before the algorithm terminates. The rule apex of the pair weakly rule-dominates the $\mathbf{g}$-value of the path $x$ according to Lemma \ref{lem:induction} and merging it with other pairs does not change this fact according to Lemma \ref{lem:merge_consistency}. Since the pair also remains $\EPS$-bounded (due to the conditions on Line \ref{line:insert:eps-bound-check} in Alg \ref{alg:insert}, Line \ref{line:isdomdr:solutions-check} in Alg \ref{alg:is_dominated_dim_reduction}, Lemma \ref{lem:monotonic_non_decreasing}, and the consistency of the heuristic function), the $\mathbf{f}$-value of its representative realization always $\EPS$-rule-dominates the $\mathbf{f}$-value of itself, which equals its rule apex. Putting it all together, the $\mathbf{f}$-value of its representative realization $\EPS$-rule-dominates the $\mathbf{g}$-value of the realization $x$ according to Lemma \ref{lem:transitivity}. Thus, the theorem holds by definition for the realization $x$ since the $\mathbf{g}$- and $\mathbf{f}$-values of the solutions are equal to their costs.
\end{proof}

\vspace{-6mm}
\section{Experimental Results}
\vspace{-2mm}
We evaluate the efficiency of \rapexAlg across three different problem settings that differ in both the presence of an objective hierarchy and the use of approximation.
In setting $\textbf{(S1)}$ objectives are non-hierarchical
while
in setting $\textbf{(S2)}$ and $\textbf{(S3)}$ objectives are hierarchical and we do not / do allow an approximation, respectively.
Consequentially, we compare \rapexAlg against a state-of-the-art algorithm appropriate for each setting:
in $\textbf{(S1)}$ against~\apexAlg,
in $\textbf{(S2)}$ against rulebook-based complete control synthesis \cite{WongpiromsarnSF26}, which we refer to as \rbexAlg
and
in $\textbf{(S3)}$ against an approximate extension of \rbexAlg, which we refer to as \rbapAlg.
The experiments were conducted on the BAY roadmap from the 9th DIMACS Implementation Challenge: Shortest Path \cite{DIMACSImplementationChallenge2005}, as well as on randomly generated graphs. All experiments were run on the Macbook Air with an Apple M4 Chip, 24GB of memory, and a five-minute runtime limit per instance. All algorithms were implemented in \verb!C++!, reusing common code from \apexAlg and \rbexAlg wherever possible.\footnote{https://github.com/Infus3d/Rulebook\_approximation}

We used the same 25 generated 3-objective roadmap instances that Zhang et al.~\cite{zhang2022pex} used with randomly selected start and goal states for each roadmap.
A fourth objective was added by assigning to each edge a value randomly drawn from $\{0,1\}$. The heuristic function $\mathbf{h}$ was set as a vector, where each component corresponds to a single objective and is given by the minimum achievable cost from a state $s$ to the goal with respect to that objective, computed using Dijkstra's algorithm. Since computing $\mathbf{h}$ accounted for only a small fraction of the total runtime, all reported runtimes exclude this preprocessing step.

While \rapexAlg outperforms the existing methods in both settings $\textbf{(S2)}$ and $\textbf{(S3)}$, we observed that it performs the best in the setting $\textbf{(S3)}$ since the approximated planning takes advantage of the grouping of the similar cost paths in the search frontier, making it up to two orders of magnitude faster than the exact planning in $\textbf{(S2)}$ on the 4-objective roadmap instances from the BAY dataset.

\begin{figure}[t]
\centering
\begin{minipage}{0.45\linewidth}
\centering
\begin{tikzpicture}
\begin{axis}[
    width=1.0\linewidth,  
    height=0.8\linewidth, 
    xlabel={$\varepsilon$},
    ylabel={Average runtime (seconds)},
    xlabel style={yshift=2mm}, 
    ylabel style={yshift=-1mm}, 
    xmode=log,
    grid=both,
    legend pos=north east,
    tick label style={font=\scriptsize},  
    label style={font=\scriptsize},       
    legend style={font=\scriptsize}       
]
\addplot[
    color=blue,
    mark=o,
    thick,
    solid
] coordinates {
    (0.0001, 5.58)
    (0.001, 1.77)
    (0.01, 0.82)
    (0.1, 0.34)
};
\addlegendentry{\apexAlg}
\addplot[
    color=orange,
    mark=square*,
    thick,
    dash pattern=on 2pt off 1pt
] coordinates {
    (0.0001, 8.42)
    (0.001, 2.15)
    (0.01, 1.22)
    (0.1, 0.58)
};
\addlegendentry{\rapexAlg}
\end{axis}
\end{tikzpicture}
\end{minipage}
\hfill
\begin{minipage}{0.45\linewidth}
\centering
\begin{tikzpicture}
\begin{axis}[
    width=1.0\linewidth,  
    height=0.8\linewidth, 
    xlabel={$\varepsilon$},
    ylabel={Average solution set size},
    xlabel style={yshift=2mm}, 
    ylabel style={yshift=-1mm}, 
    xmode=log,
    grid=both,
    legend pos=north east,
    tick label style={font=\scriptsize},  
    label style={font=\scriptsize},       
    legend style={font=\scriptsize}       
]
\addplot[
    color=blue,
    mark=o,
    thick,
    solid
] coordinates {
    (0.0001, 130.76)
    (0.001, 35.92)
    (0.01, 7.68)
    (0.1, 1.12)
};
\addlegendentry{\apexAlg}
\addplot[
    color=orange,
    mark=square*,
    thick,
    dash pattern=on 2pt off 1pt
] coordinates {
    (0.0001, 133.48)
    (0.001, 36.4)
    (0.01, 7.56)
    (0.1, 1.2)
};
\addlegendentry{\rapexAlg}
\end{axis}
\end{tikzpicture}
\end{minipage}
\vspace{-4mm}
\caption{Comparison of average runtimes (left) and average solution set sizes (right) on 3-objective roadmap instances with 321,270 states and 794,830 edges, for different approximation factors $\varepsilon$.}
\vspace{-1mm}
\label{fig:bay_comparison}
\end{figure}

\vspace{-4.5mm}
\subsection*{Setting (S1): Approximate Planning without Objective Hierarchy}
\vspace{-1.5mm}
Fig. \ref{fig:bay_comparison} shows the average runtimes (in seconds) of \apexAlg and \rapexAlg over all roadmap instances in the BAY dataset, plotted as a function of the approximation factor.
In these experiments, $\EPS$ is a vector whose components are set to the same scalar value $\varepsilon$, which is now represented on the x-axis for clarity..

Across all instances, the runtime of \rapexAlg is within $1.5\times$ the runtime of \apexAlg while producing approximate solution sets of comparable size. This shows that, in the absence of objective hierarchies, \rapexAlg incurs only modest overhead relative to \apexAlg while maintaining similar solution quality.

\vspace{-4.5mm}
\subsection*{Setting (S2): Exact Planning with Hierarchical Objectives}
\vspace{-1.5mm}
%
Fig. \ref{fig:pareto_optimal} (Left) compares \rapexAlg and \rbexAlg on 3-objective random graph instances of increasing size, reporting both runtime (in seconds) and the size of the returned solution sets. The rule hierarchy is $r_1 > r_2$ and $r_1 > r_3$. On the smallest instances, both algorithms exhibit comparable runtimes. As graph size increases, however, the runtime of \rbexAlg grows rapidly, becoming more than two orders of magnitude slower than \rapexAlg on the largest instances.

Fig. \ref{fig:pareto_optimal} (Right) shows detailed results for 4-objective roadmap instances from the BAY dataset, with rule hierarchy $r_4 > r_1$, $r_4 > r_2$, and $r_4 > r_3$. In all of the instances, \rbexAlg failed to finish within five minutes, while \rapexAlg completed all but the three largest instances. For instances with up to one million explored nodes, \rapexAlg found the solution set in about a minute, while for instances with up to 100,000 explored nodes, it completed in under 5 seconds.


\begin{figure}[t]
\centering
\begin{minipage}{0.43\linewidth}
\centering
\begin{tikzpicture}
\begin{axis}[
    width=1.0\linewidth,
    height=1.0\linewidth,
    xlabel={Graph size},
    ylabel={Runtime (seconds)},
    xlabel style={yshift=2mm}, 
    ylabel style={yshift=-1mm}, 
    ymode=log,
    grid=both,
    legend style={font=\scriptsize, at={(1.02,0.01)}, anchor=south east},
    tick label style={font=\scriptsize},
    label style={font=\scriptsize},
]
\addplot[
    color=blue,
    mark=o,
    thick
] coordinates {
    (25, 0.0009)
    (100, 0.14)
    (300, 4.1)
    (500, 23.4)
    (700, 93.5)
};
\addlegendentry{\rbexAlg runtime}
\addplot[
    color=orange,
    mark=square*,
    thick,
    dash pattern=on 2pt off 1pt
] coordinates {
    (25, 0.0002)
    (100, 0.003)
    (300, 0.011)
    (500, 0.07)
    (700, 0.11)
};
\addlegendentry{\rapexAlg runtime}
\addlegendimage{
    color=gray,
    mark=triangle*,
    thick
}
\addlegendentry{Solution set size}
\end{axis}
\begin{axis}[
    width=1.0\linewidth,
    height=1.0\linewidth,
    xlabel style={yshift=2mm}, 
    ylabel style={yshift=1mm}, 
    axis x line=none,
    axis y line*=right,
    ylabel={Solution set size},
    tick label style={font=\scriptsize},
    label style={font=\scriptsize},
]
\addplot[
    color=gray,
    mark=triangle*,
    thick
] coordinates {
    (25, 3)
    (100, 4)
    (300, 7)
    (500, 8)
    (700, 9)
};
\end{axis}
\end{tikzpicture}
\end{minipage}
\hfill
\begin{minipage}{0.43\linewidth}
\centering
\begin{tikzpicture}
\begin{axis}[
    width=1.0\linewidth,
    height=1.0\linewidth,
    xlabel={Instance},
    ylabel={Number of nodes ($\times 10^6$)},
    scaled y ticks=false,
    ytick={0,500000,1000000,1500000},
    yticklabels={0,0.5,1.0,1.5},
    xlabel style={yshift=2mm}, 
    ylabel style={yshift=-1mm}, 
    grid=both,
    legend style={font=\scriptsize, at={(1.12,0.01)}, anchor=south east},
    tick label style={font=\scriptsize},
    label style={font=\scriptsize},
]
\addplot[
    color=blue,
    mark=o,
    thick
] coordinates {
    (1, 26939)
    (2, 54259)
    (3, 76575)
    (4, 413077)
    (5, 829329)
    (6, 830249)
    (7, 1086092)
    (8, 1362012)
    (9, 1525848)
};
\addlegendentry{Nodes explored}

\addplot[
    color=gray,
    mark=triangle*,
    thick,
] coordinates {
    (1, 23389)
    (2, 47496)
    (3, 66016)
    (4, 363586)
    (5, 742390)
    (6, 714992)
    (7, 938519)
    (8, 1175205)
    (9, 1287778)
};
\addlegendentry{Nodes expanded}

\addlegendimage{
    color=orange,
    mark=square*,
    thick,
    dash pattern=on 2pt off 1pt
}
\addlegendentry{\rapexAlg runtime}
\end{axis}

\begin{axis}[
    width=1.0\linewidth,
    height=1.0\linewidth,
    xlabel style={yshift=2mm}, 
    ylabel style={yshift=1mm}, 
    axis x line=none,
    axis y line*=right,
    ylabel={Runtime (seconds)},
    ymode=log,
    tick label style={font=\scriptsize},
    label style={font=\scriptsize},
]

\addplot[
    color=orange,
    mark=square*,
    thick,
    dash pattern=on 2pt off 1pt
] coordinates {
    (1, 3.99)
    (2, 3.43)
    (3, 2.6)
    (4, 17.6)
    (5, 60.6)
    (6, 65.2982)
    (7, 69.1)
    (8, 140.49)
    (9, 150.73)
};

\end{axis}
\end{tikzpicture}
\end{minipage}
\vspace{-4mm}
\caption{(Left) Runtimes (in seconds) and sizes of solution sets for \rbexAlg and \rapexAlg on random graphs with 3 rules ($r_1 > r_2, r_3$) under exact dominance (i.e., for $\EPS = \mathbf{0}$). (Right) Runtimes (in seconds) for \rapexAlg on roadmap instances with 4 rules ($r_4 > r_1, r_2, r_3$), along with the number of nodes explored and expanded during the search.
}
\label{fig:pareto_optimal}
\end{figure}

\begin{figure}[t]
\centering
\begin{minipage}{0.3\linewidth}
\centering
\begin{tikzpicture}[
    ->, >=Stealth, 
    node distance=2cm,
    every node/.style={draw, circle, minimum size=8mm, font=\small}
]
\node (r3) {r3};
\node (r4) [right of = r3] {r4};
\node (r1) [below of = r4] {r1};
\node (r2) [below of = r3] {r2};
\draw[bend left=20] (r3) to (r4);
\draw[bend left=20] (r4) to (r3);
\draw (r4) -- (r1);
\draw (r3) -- (r2);
\end{tikzpicture}
\end{minipage}
\hfill
\begin{minipage}{0.5\linewidth}
\centering
\begin{tikzpicture}
\begin{axis}[
    width=1.0\linewidth,
    height=0.8\linewidth,
    xlabel={Graph size},
    ylabel={Average solution set size},
    grid=both,
    legend style={font=\scriptsize, at={(0.00,1.18)}, anchor=north west},
    tick label style={font=\scriptsize},
    label style={font=\scriptsize},
]
\addplot[
    color=blue,
    mark=o,
    thick
] coordinates {
    (100, 7.1)
    (200, 8.4)
    (300, 8.7)
    (400, 10.4)
    (500, 11.7)
};
\addlegendentry{\rbexAlg (solution)}
\addplot[
    color=gray,
    mark=square*,
    thick
] coordinates {
    (100, 7.1)
    (200, 8.0)
    (300, 8.4)
    (400, 9.8)
    (500, 11.6)
};
\addlegendentry{\rbapAlg (solution)}
\addplot[
    color=orange,
    mark=triangle*,
    thick
] coordinates {
    (100, 7.0)
    (200, 8.0)
    (300, 8.3)
    (400, 9.6)
    (500, 11.6)
};
\addlegendentry{\rapexAlg (solution)}

\addlegendimage{
    color=gray,
    mark=square*,
    thick,
    dash pattern=on 2pt off 1pt
}
\addlegendentry{\rbapAlg (runtime)}

\addlegendimage{
    color=orange,
    mark=triangle*,
    thick,
    dash pattern=on 2pt off 1pt
}
\addlegendentry{\rapexAlg (runtime)}
\end{axis}

\begin{axis}[
    width=1.0\linewidth,
    height=0.8\linewidth,
    axis x line=none,
    axis y line*=right,
    ylabel={Average runtime (seconds)},
    ymode=log,
    tick label style={font=\scriptsize},
    label style={font=\scriptsize},
]
\addplot[
    color=gray,
    mark=square*,
    thick,
    dash pattern=on 2pt off 1pt
] coordinates {
    (100, 0.03)
    (200, 0.16)
    (300, 0.37)
    (400, 0.88)
    (500, 1.33)
};
\addplot[
    color=orange,
    mark=triangle*,
    thick,
    dash pattern=on 2pt off 1pt
] coordinates {
    (100, 0.005)
    (200, 0.016)
    (300, 0.033)
    (400, 0.058)
    (500, 0.081)
};
\end{axis}
\end{tikzpicture}
\end{minipage}
\vspace{-4mm}
\caption{(Left) A preorder of four rules in \textbf{(S3)}). (Right) The average solution set sizes and runtimes for \rbexAlg, \rbapAlg and \rapexAlg on random graphs with this rulebook under the approximation $\EPS = [0.01, 0.01, 0.01, 0.01]$.}
\label{fig:random_approx_pareto_optimal_r34Equal_hierarchy}
\end{figure}

\vspace{-4.5mm}
\subsection*{Setting (S3): Approximate Planning with Hierarchical Objectives}
\vspace{-2mm}
Lastly, we report the performance of \rapexAlg in the presence of rule hierarchies with non-zero approximation vector $\EPS$. Due to the absence of a prior work done on this, we compared the results against the approximate extension of \rbexAlg, where an additional global dominance check was incorporated. In this variant, which we call \rbapAlg, whenever a new candidate realization $x$ is constructed, this global dominance check examines whether or not the cost of $x$ is approximately dominated by some realization $x'$ already present in the solution set. If so, then the realization $x$ is discarded since the cost of any realization with the prefix $x$ will also be approximately dominated by $x'$. Consequently, \rbapAlg not only finds an $\EPS$-approximate rulebook-optimal solution set but also guarantees that every solution is rulebook-optimal.

For this experiment, we randomly generated graphs of varying sizes with 4 objectives. The rule hierarchy assigns equal importance to $r_3$ and $r_4$, while $r_1$ and $r_2$ have lower priority than both. Specifically, the hierarchy (illustrated in Fig. \ref{fig:random_approx_pareto_optimal_r34Equal_hierarchy}) is given by $r_3 \lesssim r_4$, $r_4 \lesssim r_3$, $r_2 < r_3$, $r_1 < r_4$. Fig. \ref{fig:random_approx_pareto_optimal_r34Equal_hierarchy} shows the average runtimes and the solution sizes for \rbapAlg and \rapexAlg. In all of them, \rapexAlg runs more than an order of magnitude faster than \rbapAlg, with the difference growing more and going up to 16$\times$ the runtime of \rbapAlg on the largest graph. The solution set sizes, however, remain relatively comparable, with \rapexAlg having a slight advantage on average while \rbapAlg being closer to the rulebook-optimal frontier sizes. This makes sense when we consider that the solution set returned by \rbapAlg is also a subset of the rulebook-optimal frontier, and hence might require more solutions from the frontier to cover what a few solutions not on the frontier could approximately cover.

We also evaluate the effect of dimensionality reduction in \rapexAlg by comparing it against a baseline version that operates in the full objective space. As shown in Figure~\ref{fig:rapex_vs_baselineRapex}, dimensionality reduction leads to a substantial decrease in runtime, often by an order of magnitude (up to 17$\times$) for better $\EPS$ factors, while producing solution sets of comparable size. Although the extent of this improvement depends on the specific rule hierarchy, these results demonstrate that exploiting rulebook structure to reduce effective dimensionality can significantly improve computational efficiency without sacrificing solution quality.







\begin{figure}[t]
\centering
\begin{minipage}{0.48\linewidth}
\centering
\begin{tikzpicture}
\begin{loglogaxis}[
    xlabel={Baseline \rapexAlg},
    ylabel={\rapexAlg},
    width=1.0\linewidth,  
    height=0.8\linewidth, 
    xmin=0.1, xmax=350,
    ymin=0.1, ymax=350,
    axis on top,
    legend pos=north west,           
    legend style={font=\scriptsize},
    legend cell align=left,
    xtick={0.1, 1, 10, 100},
    ytick={0.1, 1, 10, 100},
    tick align=inside,
    major tick length=0.15cm,
    minor tick length=0.08cm,
    tick label style={font=\scriptsize},  
    label style={font=\scriptsize},       
]


    \addplot[domain=0.1:350, samples=2, dashed, gray, thick, forget plot] {x}
        node[pos=0.8, above left, yshift=-3pt, black, font=\scriptsize\bfseries\sffamily] {1x};

    \addplot[domain=0.1:350, samples=2, dashed, gray, thick, forget plot] {0.2*x}
        node[pos=0.8, above left, xshift=5pt, black, font=\scriptsize\bfseries\sffamily] {5x};

    \addplot[domain=0.1:350, samples=2, dashed, gray, thick, forget plot] {0.05714*x}
        node[pos=0.83, above right, yshift=-5pt, black, font=\scriptsize\bfseries\sffamily] {17x};


    \addplot[
        only marks,
        mark=*,
        mark size=2.5pt,
        color=yellow!30!black,
        fill=yellow!90!white,
        fill opacity=0.5
    ] coordinates {
        (0.278249, 0.242563)
        (1.39836, 0.92723)
        (0.521632, 0.364693)
        (0.179364, 0.154838)
        (2.31062, 1.43322)
        (0.535222, 0.395524)
        (4.73583, 3.12618)
        (0.096946, 0.082752)
        (0.438637, 0.453319)
        (0.844323, 0.455552)
        (0.027501, 0.031232)
        (1.91061, 1.46438)
        (3.54767, 2.75385)
        (0.076467, 0.072451)
        (0.8337, 0.694022)
        (2.20427, 1.55267)
        (0.628665, 0.413462)
        (0.067561, 0.061324)
        (0.049566, 0.054438)
        (3.46893, 1.49077)
        (0.046773, 0.042855)
        (0.312467, 0.288197)
        (0.447858, 0.423595)
        (0.035775, 0.038798)
        (1.43272, 0.73592)
    };
    \addlegendentry{$\varepsilon$ = 0.01}

    \addplot[
        only marks,
        mark=*,
        mark size=2.5pt,
        color=orange!90!black,
        fill=orange!90!white,
        fill opacity=0.6
    ] coordinates {
        (1.29759, 0.563712)
        (5.32341, 1.56354)
        (1.39163, 0.627631)
        (0.533818, 0.234604)
        (11.3271, 2.72566)
        (1.29711, 0.622449)
        (34.7387, 6.19527)
        (0.259944, 0.157478)
        (2.09487, 0.797591)
        (1.90382, 0.910609)
        (0.059681, 0.050967)
        (5.97376, 2.104)
        (11.3518, 4.85029)
        (0.424031, 0.251112)
        (5.53381, 1.76673)
        (8.86587, 2.97476)
        (2.40319, 0.763425)
        (0.173269, 0.122973)
        (0.1262, 0.104151)
        (23.3802, 3.20727)
        (0.093057, 0.060271)
        (1.67824, 0.523993)
        (1.66248, 0.608766)
        (0.0565, 0.049977)
        (9.44679, 1.63738)
    };
    \addlegendentry{$\varepsilon$ = 0.001}

    \addplot[
        only marks,
        mark=*,
        mark size=2.5pt,
        color=red!10!black,
        fill=red!90!white,
        fill opacity=0.7
    ] coordinates {
        (3.16818, 0.937689)
        (19.6015, 2.46374)
        (2.59193, 0.885928)
        (1.244, 0.309127)
        (35.9699, 5.09753)
        (2.53504, 0.850288)
        (119.013, 12.4646)
        (0.568707, 0.226857)
        (4.8568, 1.29186)
        (8.53443, 1.75535)
        (0.088679, 0.063899)
        (28.129, 3.49878)
        (55.0542, 11.1792)
        (0.830504, 0.345469)
        (10.8242, 2.37179)
        (15.5952, 4.09765)
        (5.94939, 1.10977)
        (0.260515, 0.140369)
        (0.273047, 0.161141)
        (70.6848, 4.34772)
        (0.126622, 0.067245)
        (4.41472, 0.776258)
        (4.68409, 0.937405)
        (0.088932, 0.059495)
        (18.3487, 2.40209)
    };
    \addlegendentry{$\varepsilon$ = 0.0001}

\end{loglogaxis}
\end{tikzpicture}
\end{minipage}
\hfill
\begin{minipage}{0.48\linewidth}
\centering
\begin{tikzpicture}
\begin{loglogaxis}[
    xlabel={Baseline \rapexAlg},
    ylabel={\rapexAlg},
    width=1.0\linewidth,  
    height=0.8\linewidth, 
    xmin=1, xmax=350,
    ymin=1, ymax=350,
    axis on top,
    legend pos=north west,
    legend style={font=\scriptsize},
    legend cell align=left,
    xtick={1, 10, 100},
    ytick={1, 10, 100},
    tick align=inside,
    major tick length=0.15cm,
    minor tick length=0.08cm,
    tick label style={font=\scriptsize},  
    label style={font=\scriptsize},       
]

    \addplot[domain=1:350, samples=2, dashed, gray, thick, forget plot] {x}
        node[pos=0.8, above left, black, font=\scriptsize\bfseries\sffamily] {1x};




    \addplot[
        only marks,
        mark=*,
        mark size=2.5pt,
        color=yellow!30!black,
        fill=yellow!90!white,
        fill opacity=0.5
    ] coordinates {
        (4.0, 4.0)
        (11.0, 14.0)
        (11.0, 6.0)
        (6.0, 8.0)
        (6.0, 7.0)
        (7.0, 7.0)
        (13.0, 20.0)
        (3.0, 2.0)
        (8.0, 8.0)
        (6.0, 4.0)
        (2.0, 2.0)
        (5.0, 5.0)
        (4.0, 4.0)
        (3.0, 2.0)
        (4.0, 4.0)
        (6.0, 10.0)
        (7.0, 7.0)
        (6.0, 4.0)
        (2.0, 3.0)
        (24.0, 25.0)
        (7.0, 6.0)
        (5.0, 5.0)
        (5.0, 7.0)
        (3.0, 3.0)
        (11.0, 10.0)
    };
    \addlegendentry{$\varepsilon$ = 0.01}

    \addplot[
        only marks,
        mark=*,
        mark size=2.5pt,
        color=orange!90!black,
        fill=orange!90!white,
        fill opacity=0.6
    ] coordinates {
        (24.0, 23.0)
        (38.0, 38.0)
        (21.0, 22.0)
        (27.0, 27.0)
        (41.0, 38.0)
        (19.0, 18.0)
        (113.0, 80.0)
        (11.0, 11.0)
        (30.0, 29.0)
        (19.0, 19.0)
        (7.0, 7.0)
        (28.0, 28.0)
        (28.0, 31.0)
        (14.0, 13.0)
        (27.0, 22.0)
        (24.0, 24.0)
        (29.0, 29.0)
        (10.0, 10.0)
        (5.0, 5.0)
        (75.0, 68.0)
        (13.0, 13.0)
        (42.0, 43.0)
        (23.0, 23.0)
        (6.0, 6.0)
        (73.0, 76.0)
    };
    \addlegendentry{$\varepsilon$ = 0.001}

    \addplot[
        only marks,
        mark=*,
        mark size=2.5pt,
        color=red!10!black,
        fill=red!90!white,
        fill opacity=0.7
    ] coordinates {
        (42.0, 42.0)
        (127.0, 127.0)
        (37.0, 37.0)
        (54.0, 54.0)
        (91.0, 91.0)
        (34.0, 34.0)
        (282.0, 275.0)
        (32.0, 32.0)
        (49.0, 49.0)
        (59.0, 60.0)
        (10.0, 10.0)
        (92.0, 92.0)
        (80.0, 80.0)
        (24.0, 24.0)
        (63.0, 60.0)
        (48.0, 47.0)
        (54.0, 54.0)
        (15.0, 15.0)
        (10.0, 10.0)
        (150.0, 150.0)
        (18.0, 18.0)
        (79.0, 79.0)
        (52.0, 52.0)
        (12.0, 12.0)
        (101.0, 101.0)
    };
    \addlegendentry{$\varepsilon$ = 0.0001}

\end{loglogaxis}
\end{tikzpicture}
\end{minipage}
\vspace{-4mm}
\caption{Individual runtimes (left) and solution sizes (right) of the baseline \rapexAlg and \rapexAlg with dimensionality reduction on 4-objective road map instances with 321,270 states and 794,830 edges for varying $\EPS$ approximations with the rule hierarchy being $r_1 \sim r_2$, $r_1>r_3$, $r_2 > r_4$ (similar to Figure \ref{fig:random_approx_pareto_optimal_r34Equal_hierarchy}).}
\label{fig:rapex_vs_baselineRapex}
\end{figure}

\vspace{-5mm}
\section{Conclusions}
\vspace{-3mm}
We presented \rapexAlg, an approximate multi-objective search algorithm that leverages the rulebook formalism to precisely express complex relationships among objectives, while providing provable approximation guarantees. \rapexAlg generalizes \apexAlg to settings where objectives are partially ordered by priority. A key theoretical contribution of this work is the identification of one-sided transitivity as a crucial structural property for approximate rule-dominance. We showed that the proposed definition of $\EPS$-rule-dominance satisfies one-sided transitivity with respect to exact rule-dominance, which allows dominance relations to be safely propagated across search expansions and establish the correctness of \rapexAlg. 

Our experimental results evaluated \rapexAlg across three problem settings that vary in the presence of objective hierarchy and the use of approximation. Across all settings, \rapexAlg consistently matched or significantly outperformed specialized state-of-the-art algorithms. Overall, \rapexAlg bridges the gap between exact rulebook-based planning and approximate multi-objective search, offering a principled and efficient approach to approximate multi-objective search under rulebooks. We believe this approach opens the door to scalable planning and control synthesis in complex systems where objectives are partially ordered and exact optimality is neither required nor tractable.

\begin{credits}
    This work was supported in part by NSF under Grant
    CNS-2141153.
\end{credits}


%
\bibliographystyle{splncs04}
\bibliography{bibliography}

\end{document}